\documentclass{article}

\PassOptionsToPackage{round}{natbib}

\usepackage[preprint]{neurips_2026}

\usepackage[utf8]{inputenc} 
\usepackage[T1]{fontenc}    
\usepackage[hidelinks]{hyperref}       
\usepackage{url}            
\usepackage{booktabs}       
\usepackage{amsfonts}       
\usepackage{nicefrac}       
\usepackage{microtype}      
\usepackage{xcolor}         

\usepackage{amsmath}
\usepackage{amssymb}
\usepackage{amsthm}
\usepackage{mathtools}
\usepackage{calc}
\usepackage{dsfont}
\usepackage{multirow}
\usepackage{graphicx}
\usepackage[figuresleft]{rotating}
\usepackage{wrapfig}
\usepackage{subcaption}
\usepackage{hyperref}
\usepackage{enumerate}
\usepackage{algorithmic}
\usepackage{algorithm}
\usepackage{threeparttable}
\usepackage[capitalize,noabbrev]{cleveref}

\theoremstyle{plain}
\newtheorem{theorem}{Theorem}[section]
\newtheorem{proposition}[theorem]{Proposition}

\newtheorem{corollary}[theorem]{Corollary}
\theoremstyle{definition}

\theoremstyle{remark}
\newtheorem{remark}[theorem]{Remark}

\newtheorem{taggedpropositionx}{Proposition}

\newtheorem{taggedtheoremx}{Theorem}

\newtheorem{taggedcorollaryx}{Corollary}

\def\calA{\mathcal A}

\def\calF{\mathcal F}

\def\calI{\mathcal I}

\def\calM{\mathcal M}
\def\calN{\mathcal N}

\def\calR{\mathcal R}

\def\bE{\mathbb E}

\def\bM{\mathbb M}

\def\bP{\mathbb P}
\def\bQ{\mathbb Q}
\def\bR{\mathbb R}

\newcommand\restr[2]{{
  \left.\kern-\nulldelimiterspace 
  #1 
  \vphantom{\big|} 
  \right|_{#2} 
  }}

\newcommand{\KL}{\operatorname{D_{KL}}}

\DeclareMathOperator*{\argmin}{arg\,min}

\newcommand*{\T}{^{\mkern-1.5mu\mathsf{T}}}  
\newcommand{\given}{\;\middle|\;}

\title{Reflected Schrödinger Bridge Matching}

\author{%
  Marcus Häggbom\thanks{Equal contribution.}\\
  SEB Group \textit{and} \\
  Department of Mathematics\\
  KTH Royal Institute of Technology\\
  Stockholm, Sweden \\
  \texttt{haggbo@kth.se} \\
  \And
  Viktor Nilsson\footnotemark[1] \\
  Department of Mathematics\\
  KTH Royal Institute of Technology\\
  Stockholm, Sweden \\
  \texttt{vikn@kth.se} \\
  \And
  Pierre Nyquist \\
  Department of Mathematical Sciences\\
  Chalmers University of Technology \textit{and}\\
  University of Gothenburg \textit{and}\\
  Department of Mathematics\\
  KTH Royal Institute of Technology\\ 
  \texttt{pnyquist@chalmers.se} \\
  \And
  Joakim Andén \\
  Department of Mathematics\\
  KTH Royal Institute of Technology\\
  Stockholm, Sweden \\
  \texttt{janden@kth.se} \\
}

\begin{document}

\maketitle

\begin{abstract}
    Recent advances in generative modeling have enabled the efficient computation of Schrödinger bridges (SB) in high-dimensional settings by leveraging partially simulation-free training methods inspired by flow matching.
    However, these have not covered SBs with reflecting dynamics, a useful model choice with built-in guarantees that generated samples stay in the data domain.
    Existing alternatives for reflected SBs instead rely on more complex training based on forward--backward SDE theory, requiring expensive higher-order derivatives and sampling entire paths during training.
    In this article, we introduce a partially simulation-free framework that allows reflected SBs to be trained similarly to flow matching, using a new sampling method and regression target.
    We demonstrate our results by coupling pairs of well-known high-dimensional image datasets.
    Using reflected dynamics incurs negligible additional wall-clock time during both training and inference while maintaining or slightly improving generative performance.
\end{abstract}
\section{Introduction}
Recent state-of-the-art progress in high-dimensional image generation has focused on iterative refinement methods such as diffusion models \citep{sohl2015deep, song2019generative, ho2020denoising, song2020score, dhariwal2021diffusion, karras2022elucidating} and closely related methods such as flow matching \citep{lipman2023flow, albergo2023stochastic}.
In \citet{liu2022flow} and \citet{liu2022rectified}, flow matching is brought closer to optimal transport theory by adding an iterative refitting step to the framework, known as \emph{rectified flows}.
This straightens out paths, but still offers no optimal transport (OT) guarantees.
To obtain these, a more suitable approach is given by Schrödinger bridges (SB), which are a probabilistic framework for coupling two arbitrary distributions $\pi_0, \pi_1$ that generalizes entropic optimal transport \citep{cuturi2013}. 

In theory, SBs can be seen as regularizations of an OT plan, where the OT plan is obtained in the low diffusion limit.
This limit is well-established via weak convergence and large deviations theory \citep{mikami2004monge, leonard2012schrodinger, bernton2022entropic, nilsson2025large}.
In practice, SBs can be learned as \emph{diffusion SBs} (DSB) \citep{bortoli2021diffusion} using iterative proportional fitting (IPF).
A recent, more efficient alternative is iterative Markovian fitting (IMF) \citep{shi2023diffusion, peluchetti2023diffusion}, where models are trained in a manner very similar to rectified flows.
Another, more flexible, version of this method is known as $\alpha$-IMF \citep{debortoli2024schrodinger}.

Most works on SBs have focused on unconstrained reference processes.
For example, \citet{lou2023reflected} introduce \emph{reflected forward/reverse processes} to the diffusion model framework, constraining them to the unit hypercube, and showcase practical and theoretical gains for classifier-free guidance.
In contrast, \citet{deng2024reflected} introduce a method of learning DSBs under \emph{reflected reference processes} based on forward--backward SDE theory introduced in \citet{chen2021likelihood}.
However, this method is very inefficient as it (1) requires full simulation for every training sample and (2) uses an expensive loss function which includes a divergence term.

Our goal in this paper is to take the efficient training method of ($\alpha$-)IMF and bring it to reflected Schrödinger bridges, specifically where the reference process is a reflected Brownian motion (RBM) on the unit hypercube.

\paragraph{Notation}
We work on the canonical path space of continuous functions on \([0, 1]\) taking values in \(\bR^d\), \(\Omega = C([0, 1], \bR^d)\).
Let \(X\) denote the canonical process \(X_t(\omega)\coloneqq\omega_t \in \bR^d\).
For a path measure \(\bP\), denote the projection \(\bP_t \coloneqq (X_t)_\# \bP\), and similarly \(\bP_{s, t} \coloneqq (X_s, X_t)_\# \bP\).
Furthermore, \(\bP_{\mid s}(\,\cdot\mid x_s)\) is the conditional path law \(\bP(\, \cdot \mid X_s = x_s)\), and \(\bP_{t \mid s}\) is the conditional distribution of \(X_t\) given \(X_s\).
We will assume \(\bP_{t \mid s}\) has a density with respect to the Lebesgue measure, and abuse notation by using the same notation for the density.
The conditional distribution \(\bP_{\mid 0, 1}\) is called the \emph{bridge} of \(\bP\).
For a distribution \(\Pi_{0, 1}\) on \(\bR^d \times\bR^d\), \(\bP_{\mid 0, 1} \Pi_{0, 1} \coloneqq \int \bP_{\mid 0, 1}(\,\cdot\mid x_0, x_1)d\Pi_{0, 1}(x_0, x_1)\) is a \emph{bridge mixture}.
\(\bP^r\) denotes a path measure over \(C([0, 1], D)\) corresponding to a reflected process in the domain \(D \subseteq \bR^d\).
We call the collection of all possible bridge mixtures the \emph{reciprocal class} \(\calR(\bP)\) of \(\bP\), and denote \(\calM\) (\(\calM^r\)) as the set of Markov path measures of (in \(D\) reflected) diffusion processes.
When the subscript is missing, \(\nabla\) is short for \(\nabla_{x_t}\). 

\begin{figure}
    \centering
    \includegraphics[width=1\linewidth]{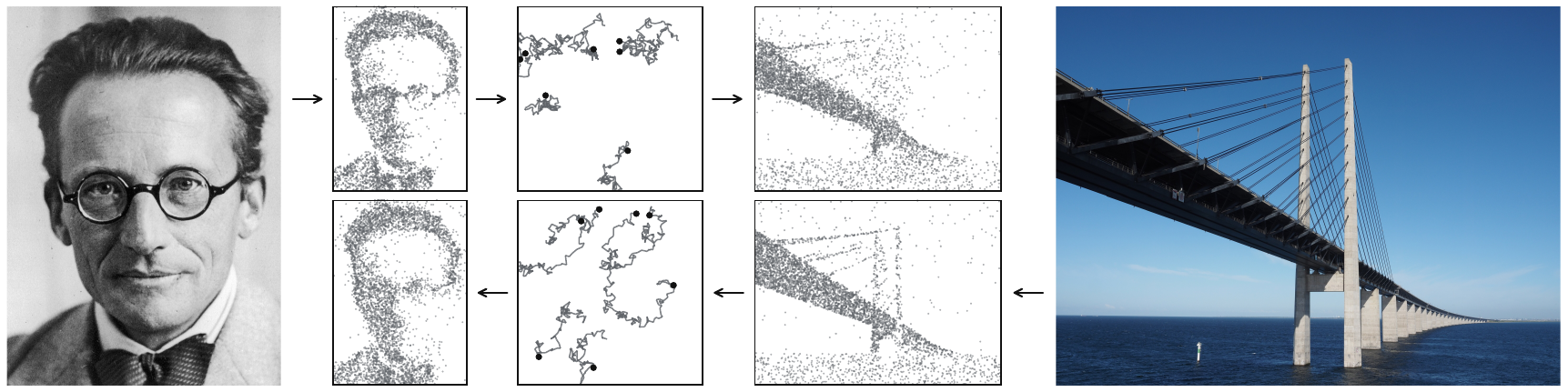}
    \caption{A reflected Schrödinger bridge between two 2D distributions resembling Erwin Schrödinger (Public Domain) and the Öresund bridge \citep[CC BY-SA 4.0]{hajotthu_oresund_bridge_2015}. The densities are derived from the pixel values of the images. Synthetic samples from one distribution are generated by evolving a reflected SDE started at a point sampled from the other.}
    \label{fig:SB-SB}
\end{figure}

\section{Related work}
\subsection{Diffusion Schrödinger bridges and bridge matching}

The first method of approximating high-dimensional SBs using deep learning was introduced by \citet{bortoli2021diffusion}.
Their method builds on an adaptation of the Sinkhorn/IPF algorithm \citep{cuturi2013}, which interleaves forward and backward network training procedures, iteratively retraining the models with respect to the reverse-time score of the other model, alternating between respecting either of the two marginal constraints of the SB.

\citet{shi2023diffusion} and \citet{peluchetti2023diffusion} established that DSBs (with an SDE reference process) can in fact be learned in a manner highly similar to flow matching (especially the rectified flows framework \citep{liu2022flow}) called \emph{DSB matching}.
In this method, which is presented in more detail below, one trains the score network against the drift of the reference bridge processes, obtained by the Doob $h$-transform \citep{rogers2000diffusions}, where the endpoints are sampled from a coupling between the two desired marginals.
The SDE with the learned drift added to the dynamics preserves the marginal distributions (under perfect training), and thus the joint distribution, of time $0$ and $1$ samples from this process is still a valid coupling.
Thus the marginal constraints remain satisfied.
The authors showed that iterating this procedure (called iterative Markovian fitting (IMF)) with the obtained coupling will converge to the Schrödinger bridge.
The whole procedure can be done either backwards or forwards in time.

If the reference process used in DSB matching is a Brownian motion, the target $h$-transform term will be 
\begin{equation}\label{eq:dSB-matching-Brownian-target}
    \frac{y-X_t}{1-t},
\end{equation}
and $X_t$ is conditionally sampled from a Brownian bridge between $x$ and $y$ at time $t \in [0,1]$, i.e. from $\calN((1-t)x + ty, \sigma^2 t (1-t))$.
If the variance $\sigma^2$ is taken to $0$, this becomes $(1-t)x + ty$, and the target \eqref{eq:dSB-matching-Brownian-target} is $y-x$ for all $t$, which is precisely the target for flow matching with straight conditional paths, used in \citet{lipman2023flow} and \citet{liu2022flow}.
In \citet{liu2022rectified}, it is shown that iterative training of a flow-matching vector field (in the fashion laid out for IMF) converges to a straight path interpolation between the two distributions. 
Despite this, these are not necessarily OT plans, while Schrödinger bridges for the appropriate reference processes are indeed true EOT plans.

\subsection{Constrained generative modeling}
The task of adapting diffusion models to constrained generation has been approached from various directions \citep[e.g.][]{liu2023mirror,fishman2023diffusion,fishman2023metropolis,christopher2024constrained}.
The treatment closest to our work is that of \citet{lou2023reflected}, who proposed a simulation-free method for learning a diffusion model where the whole process is constrained to the unit hypercube.
The forward process is a \emph{reflected SDE} which has the form
\begin{equation}\label{eq:reflected-forward-SDE}
    dX_t = b(t, X_t) dt + \sigma(t, X_t) dW_t + n(X_t) dL_t, \quad X_0 \sim \mu.
\end{equation}
Here, $\{W_t\}_t$ is a Brownian motion independent of $X_0$, $n(x)$ is an inward pointing normal at $x$, and ${L_t}$ is a \emph{local time} for the boundary which increases only while $X_t$ is on the boundary, exactly so that $X_t$ stays in the domain. 
The time-reversal $\{\overline{X}_t \coloneq X_{1-t}\}$ of \eqref{eq:reflected-forward-SDE} is given by \citep{cattiaux1988time}:
\begin{equation}\label{eq:reflected-reverse-SDE}
    d\overline{X}_t = (-b(1-t, \overline{X}_t) + \overline{g}(t, \overline{X}_t)) dt + \sigma(1-t, \overline{X}_t) d\overline{W}_t + n(\overline{X}_t) d\overline{L}_t,
\end{equation}
where $\overline{g}(t, x) \coloneq \sigma^2(1-t) \nabla \log p(1-t, x)$ (for $\sigma(t,x)=\sigma(t)\in \bR$), and $p(t, x)$ is the density of $\{X_t\}$.
Thus, one has to learn $\nabla \log p(1-t, x)$ in order to generate data with the reverse process \eqref{eq:reflected-reverse-SDE}.
The authors showed that this is possible by utilizing the Doob $h$-transform of the forward process \eqref{eq:reflected-forward-SDE}.
In numerical experiments, they showed that this setup is greatly advantageous for mitigating oversaturation effects in the generated samples, especially under classifier-free guidance \citep{ho2022classifier}.

\citet{liu2023learning} proposed a method for constrained sampling of diffusion (not Schrödinger) bridges.
By means of Doob's \(h\)-transform, a term in the drift of the (non-reflected) forward process is added, forcing the samples to a predefined support.
This is used to sample discrete distributions, with compact intervals for support being another possibility.

\citet{caluya2021reflected} were to our knowledge the first to treat the reflected Schrödinger problem, working with the control formulation of the problem.
\citet{deng2024reflected} then built upon this work, taking the problem to a high-dimensional generative modeling setting by incorporating and adjusting the costly forward-backward SDE framework of \citet{chen2021likelihood}.
In the framework of \citet{deng2024reflected}, unlike in \citet{lou2023reflected} and \citet{shi2023diffusion}, the loss is not a regression of the analytic score or score-like function.
This entails a greater flexibility in terms of reference processes and support domains.

\section{Schrödinger bridge matching}

In this section we present the framework on which the IMF\footnote{\citet{peluchetti2023diffusion} calls it the \textit{Iterated Diffusion Bridge Mixture} procedure.} \citep{shi2023diffusion, peluchetti2023diffusion} and $\alpha$-IMF \citep{debortoli2024schrodinger} algorithms are constructed.

The goal of IMF is to approximate a Schrödinger bridge $\bP^*$ with respect to some reference measure $\bQ$ between two distributions $\pi_0$ and $\pi_1$, defined as 
\begin{equation}
    \label{eq:schrödinger-bridge-problem}
    \bP^* = \argmin_\bP \left\{\KL (\bP \mid\mid \bQ) : \bP_0 = \pi_0, \bP_1 = \pi_1 \right\}.
\end{equation}
The reference measure $\bQ$ is in general the law of an SDE
\begin{equation}
    \label{eq:def-DSBM-Q-SDE}
    dX_t = \mu(t, X_t) dt + \sigma(t) dW_t, \quad X_0 \sim \bQ_0.
\end{equation}
At its core, the algorithm consists of two steps -- \textit{Markovian} and \textit{reciprocal projection} -- that are alternated.
We say that a measure $\Pi$ is in the \textit{reciprocal class} \(\calR(\bQ)\) of $\bQ$ if $\Pi$ is a mixture of $\bQ$-bridges, i.e. $\Pi = \Pi_{0, 1} \bQ_{\mid 0, 1}$, and define the reciprocal projection as $\operatorname{proj}_{\calR(\bQ)}(\Pi) := \Pi_{0, 1} \bQ_{\mid 0, 1}$.
It can be shown that under certain conditions, the Schrödinger bridge with reference measure $\bQ$ is the unique path measure that is simultaneously Markovian, in the reciprocal class of $\bQ$, and satisfies the marginal constraints \citep{leonard2014survey, shi2023diffusion}.

In general, the reciprocal projection is not Markovian. Assuming some regularity, however, for a given mixture of bridges $\Pi = \Pi_{0, 1} \bQ_{\mid 0, 1}$ one can construct a Markov process whose marginal law at time \(t\) is equal to $\Pi_t$.
This is the Markovian projection step $\operatorname{proj}_\calM(\Pi)$.
The resulting Markov process is the SDE 
\begin{equation}
    \label{eq:nonrefl-markovian-proj}
    \begin{gathered}
        dX_t = \left(\mu(t, X_t) + v^*(t, X_t)\right) dt + \sigma(t) dW_t, \quad X_0 \sim \Pi_0,\\
        v^*(t, x) = \sigma^2(t) \, \bE_{\Pi_{1 \mid t}} \left[ \nabla \log \bQ_{1 \mid t} (X_1 \mid X_t) \given X_t = x \right],
    \end{gathered}
\end{equation}
where the gradient is taken with respect to the second argument $x_t$.

Just as the reciprocal projection breaks the Markov property, the Markovian projection breaks the bridge mixture.
Instead, the algorithm preserves the marginal distributions at $t=0$ and $1$.
Initializing the algorithm at $(\pi_0 \otimes \pi_1)\, \bQ_{\mid 0, 1}$, if we converge to a fixed point then it is both Markov and in the reciprocal class, and is thus the Schrödinger bridge.

Diffusion Schrödinger bridge matching (DSBM) is a numerical implementation of the IMF algorithm.
The main challenge is the Markovian projection step, where the drift correction $v^*$ is approximated by a neural network and trained on a regression loss similar to diffusion model score matching \citep{song2019generative}.
The training procedure is initialized in a mixture of bridges between the two datasets, used in the first iteration of the Markovian projection.
Taking the reciprocal projection then amounts to sampling pairs at $t=0, 1$ from this Markov process and then sampling bridges between the points in each pair.
These are then stored in a cache to train the subsequent Markovian projection.
Naively implementing this will result in a compounded error for $\pi_1$, but due to the symmetry of the problem one can formulate a reverse-time Markovian projection where $\pi_0$ is instead the target, creating a bidirectional training scheme.

Implementing IMF requires updating the training-data cache when the optimal \(v^*\) has been found and the Markovian projection has been approximated. 
The $\alpha$-IMF algorithm is an extension of IMF, where training data is instead updated continuously in an online training fashion.
Here, the iterates are
\begin{equation}
    \Pi^{n+1} = (1 - \alpha) \Pi^n + \alpha \operatorname{proj}_{\calR(\bQ)}(\operatorname{proj}_\calM(\Pi^n)),
\end{equation}
which like IMF can be shown to converge to the Schrödinger bridge \citep{debortoli2024schrodinger}.
In the corresponding implementation $\alpha$-DSBM, the Markovian projection optimization is not minimized fully before resuming, and the parameter $\alpha$ is implicit in the (possibly dynamic) step size of the optimizer.
The training procedure also more naturally enables the forward and backward gradient fields to be parametrized by the same network and being optimized for the forward and backward losses simultaneously.
The algorithm is explained in more detail in the following section, there in its reflected version.

\section{Reflected Schrödinger bridge matching}
\label{sec:rsbm}

\subsection{Extension of IMF}
As the name suggests, RSBM is an extension of DSBM where the reference process to the SB problem \eqref{eq:schrödinger-bridge-problem} is a process \(\bQ^r\) defined by the reflected SDE
\begin{equation}
    \label{eq:def-Qr-SDE}
    dX_t = \mu(t, X_t) dt + \sigma(t) dW_t + n(X_t) dL_t, \quad X_0 \sim \bQ^r_0,
\end{equation}
constrained to some compact domain \(D\), on which \(\pi_0\) and \(\pi_1\) are assumed to be supported.
For a formal introduction to reflected SDEs, we refer to \citet{pilipenko2014introduction}. 
In a nutshell, $n(x)$ is the inward-pointing normal at $x$ and the process \(L\), called the \emph{local time} of \(X\) \citep{bjork2015pedestriansguidelocaltime}, compensates the process pathwise at the boundary, ensuring it does not escape.
When \(X\) is in the interior of \(D\), \(L\) is constant and the process behaves like the regular, non-reflected SDE with the same drift and diffusion coefficients.

From the PDE perspective, the density of the reflected process satisfies the same Fokker--Planck equation in \(D\) as the non-reflected, but with an additional Neumann boundary constraint.
The intuition is that the flow of the density over the boundary is zero, ensuring that no mass escapes and the support remains confined to \(D\).

From a numerical point of view, the reflected SDE is estimated using Euler--Maruyama discretization where at each time step, the process is either reflected back into or projected onto the domain.
As in \citet{lou2023reflected}, we opt for reflection, with an exception for the final step which is projected.

In the IMF procedure, when the reference process is a reflected SDE, the two types of projections are analogous to the original method.
The reciprocal projection is identical, i.e., \(\operatorname{proj}_{\calR(\bQ^r)}(\Pi) := \Pi_{0, 1} \bQ^r_{\mid 0, 1}\).
In practice, the reference process therefore needs to be on a sufficiently nice form that sampling the bridge is efficient.

The Markovian projection \(\operatorname{proj}_{\calM^r}(\Pi)\) in the reflected case is unsurprisingly defined as the reflected SDE
\begin{equation}
    \label{eq:def-sde-of-refl-markov-proj}
    \begin{gathered}
        dX_t = (\mu(t, X_t) + v^*(t, X_t)) dt + \sigma(t) dW_t + n(X_t) dL_t, \quad X_0 \sim \Pi_0, \\
        v^*(t, x) = \sigma^2 (t) \, \bE_{\Pi_{1 \mid t}} \left[ \nabla \log \bQ^r_{1 \mid t} (X_1 \mid X_t) \given X_t = x \right].
    \end{gathered}
\end{equation}
Comparing with \eqref{eq:nonrefl-markovian-proj}, here the drift is adjusted by an expectation of the score of the transition density of the reflected reference process \(\bQ^r_{1 \mid t}\).
Preservation of marginal distributions holds due to the following result which is an extension of the results of the non-reflected case \citep[cf.][Proposition~2]{shi2023diffusion}.
\begin{proposition}
    \label{prop:markov-proj-refl-main}
    Assume \(\sigma > 0\) and that the solutions to the involved RSDEs exist and are unique.
    With some regularity assumptions on \(d\Pi / d\bQ^r\) (see Appendix~\ref{sec:appendix-proofs}), letting \(\bM^* = \operatorname{proj}_{\calM^r}(\Pi)\), the following hold:
    \begin{enumerate}[(i)]
        \item \(\bM^*_t = \Pi_t \quad \text{for all } t \in [0, 1].\)
        \item \(\bM^* = \argmin_{\bM \in \calM^r} \KL (\Pi \mid\mid \bM).\)
        \item \begingroup
            \setlength{\abovedisplayskip}{0pt}
            \setlength{\abovedisplayshortskip}{0pt}
            \setlength{\belowdisplayskip}{0pt}
            \begin{equation}\begin{multlined}
                \label{eq:kl-div-for-markov-proj}
                \KL (\Pi \mid\mid \bM^*) \\
                = \frac{1}{2}\int_0^1 \frac{1}{\sigma^2(t)}\bE_{\Pi_{0, t}}\left[ \left\|\sigma^2(t) \bE_{\Pi_{1 \mid 0, t}} \left[ \nabla \log \bQ^r_{1 \mid t}(X_1 | X_t) \mid X_0, X_t \right] - v^*(t, X_t)\right\|^2\right] dt.
            \end{multlined}\end{equation}
        \endgroup
    \end{enumerate}
\end{proposition}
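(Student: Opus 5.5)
The plan is to follow the proof of \citet[Proposition~2]{shi2023diffusion}, replacing each use of the unconstrained Girsanov/Fokker--Planck machinery by its reflected counterpart. The basic observation is that reflected diffusions with the same diffusion coefficient $\sigma(t)$ and the same reflection mechanism $n(X_t)\,dL_t$ differ only through their drifts. Girsanov's theorem therefore applies: the local-time term is a pathwise functional of $X$ that does not depend on the drift, so it cancels in the likelihood ratio. This holds provided the reflected SDE with the modified drift still has a unique strong solution, which is part of the assumptions.

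We first describe the conditional structure of $\Pi$. Write $\Pi = \Pi_{0,1}\bQ^r_{\mid 0,1}$. Conditionally on $X_0 = x_0$, the process under $\Pi$ is an $h$-transform of the reflected reference process with
\[
h(t,x_t) = \int \bQ^r_{1\mid t}(x_1\mid x_t)\, \frac{d\Pi_{1\mid 0}}{d\bQ^r_{1\mid 0}}(x_1\mid x_0)\,d\bQ^r_{1\mid 0}(x_1\mid x_0).
\]
The Doob $h$-transform for reflected diffusions then says that $\Pi_{\mid 0}(\cdot\mid x_0)$ is the law of
\[
dX_t = \bigl(\mu + \sigma^2\nabla\log h\bigr)\,dt + \sigma\,dW_t + n\,dL_t.
\]
To justify this, check that $h$ solves the backward Kolmogorov equation with Neumann condition $n\cdot\nabla h = 0$ on $\partial D$, which is inherited from $\bQ^r_{1\mid t}$ in its backward variable. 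Itô's formula for $h(t,X_t)$ then produces no $dL_t$ term, so $h(t,X_t)/h(0,X_0)$ is a martingale. Differentiating under the integral gives
\[
\sigma^2\nabla\log h(t,x_t) = \sigma^2\,\bE_{\Pi_{1\mid 0,t}}\bigl[\nabla\log\bQ^r_{1\mid t}(X_1\mid X_t)\mid X_0=x_0, X_t=x_t\bigr] =: v_\Pi(t,x_0,x_t).
\]
So $\Pi$ is the law of a reflected SDE whose drift $\mu + v_\Pi$ depends on $X_0$ and $X_t$, with $\Pi_0$ as initial law. The regularity assumptions on $d\Pi/d\bQ^r$ are used here for differentiability, integrability and the martingale property.

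For (i), apply Itô's formula for reflected semimartingales to $f(X_t)$ with test functions $f$ satisfying $n\cdot\nabla f = 0$ on $\partial D$; this again kills the local-time term. Taking expectations under $\Pi$ and conditioning on $X_t$ shows that $t\mapsto\Pi_t$ solves the weak Fokker--Planck equation with Neumann boundary and drift
\[
\mu + \bE_\Pi[v_\Pi\mid X_t] = \mu + v^*,
\]
where the equality is the tower property. The marginal flow $\bM^*_t$ solves the same equation with the same initial law $\Pi_0$. Uniqueness for this linear Neumann problem, which would be added as an assumption or derived from well-posedness of the martingale problem for the reflected SDE, gives $\bM^*_t = \Pi_t$.

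For (iii), apply Girsanov between $\Pi$ and $\bM^*$. The two processes have the same initial law, so only the drift difference $v_\Pi - v^*$ contributes, which yields
\[
\KL(\Pi\mid\mid\bM^*) = \tfrac12\int_0^1 \sigma^{-2}\,\bE_\Pi\|v_\Pi - v^*\|^2\,dt.
\]
This is exactly \eqref{eq:kl-div-for-markov-proj} once the expectation is written over $\Pi_{0,t}$.

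For (ii), take any $\bM\in\calM^r$ with drift $\mu + v$ and finite divergence. The same Girsanov computation gives
\[
\KL(\Pi\mid\mid\bM) = \KL(\Pi_0\mid\mid\bM_0) + \tfrac12\int\sigma^{-2}\,\bE_\Pi\|v_\Pi - v(t,X_t)\|^2\,dt.
\]
The conditional expectation $\bE[v_\Pi\mid X_t] = v^*$ is the $L^2$ minimizer over functions of $X_t$, so the Pythagorean decomposition shows this quantity is minimized by $v = v^*$ and $\bM_0 = \Pi_0$, which is $\bM^*$.

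The main obstacle is the rigorous justification of the reflected $h$-transform and of Girsanov in the presence of local time. One needs the Neumann condition on $\bQ^r_{1\mid t}$ in its backward variable, so that no $dL_t$ terms survive, and enough integrability for $\nabla\log\bQ^r_{1\mid t}$, which blows up as $t\to1$. I would handle this by working on $[0,1-\varepsilon]$, where the densities are smooth and positive up to the boundary, and then letting $\varepsilon\to0$ via monotone convergence of the KL divergence.
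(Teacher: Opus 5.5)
Your overall architecture matches the paper's. You use a conditional Doob $h$-transform of $\Pi_{\mid 0}$ via Girsanov with the local-time term untouched. You get (i) from a Neumann Fokker--Planck argument with the tower property $\bE_\Pi[v_\Pi\mid X_t]=v^*$. For (ii)--(iii) you use the Girsanov formula for the KL followed by an $L^2$-projection.

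\textbf{Error in your $h$.} Since $\frac{d\Pi_{1\mid 0}}{d\bQ^r_{1\mid 0}}\,d\bQ^r_{1\mid 0}=d\Pi_{1\mid 0}$, your formula reads $h(t,x_t)=\int \bQ^r_{1\mid t}(x_1\mid x_t)\,\Pi_{1\mid 0}(dx_1\mid x_0)$. That is, $h(t,\cdot)=\bE_{\bQ^r}[g(X_1)\mid X_t=\cdot\,]$ with $g$ the \emph{density} of $\Pi_{1\mid 0}(\cdot\mid x_0)$.
\begin{itemize}
\item The $h$-transform by this function tilts the terminal law to something proportional to $\Pi_{1\mid 0}(x_1\mid x_0)\,\bQ^r_{1\mid 0}(x_1\mid x_0)$, not to $\Pi_{1\mid 0}$.
\item $\nabla\log h$ becomes an average of $\nabla\log\bQ^r_{1\mid t}$ against a measure proportional to $\Pi_{1\mid 0}(x_1\mid x_0)\,\bQ^r_{1\mid t}(x_1\mid x_t)$. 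But $\Pi_{1\mid 0,t}(x_1\mid x_0,x_t)\propto\varphi_{1\mid 0}(x_1\mid x_0)\,\bQ^r_{1\mid t}(x_1\mid x_t)$ with $\varphi_{1\mid 0}=d\Pi_{1\mid 0}/d\bQ^r_{1\mid 0}$.
\item So the identity $\sigma^2\nabla\log h=v_\Pi$ fails for the function you wrote.
\end{itemize}
The correct choice is the paper's $\varphi_{t\mid 0}$, which integrates against Lebesgue measure:
\[
h(t,x_t)=\int\varphi_{1\mid 0}(x_1\mid x_0)\,\bQ^r_{1\mid t}(x_1\mid x_t)\,dx_1 .
\]
With this the rest of your derivation goes through.

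\textbf{Sub-steps justified differently.}

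For the $h$-transform, you derive the martingale property from the backward Kolmogorov equation plus a Neumann condition on $\bQ^r_{1\mid t}$ in its backward variable. The paper runs this the other way. $\varphi_{t\mid 0}(X_t\mid x_0)$ is a $\bQ^r$-martingale for free, as a conditional expectation of a terminal variable under a Markov law. After It\^o's formula, the $ds$-term plus the $dL$-term is therefore a continuous martingale of finite variation, hence zero. This gives $d\varphi_{t\mid 0}=\nabla\varphi_{t\mid 0}\T\sigma\,dW_t$ with no PDE or boundary information about the transition density. That is more economical, and your PDE check is redundant since the martingale property was never in doubt.

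For (i), you use the weak Fokker--Planck equation with test functions satisfying $n\cdot\nabla f=0$. This removes the local-time term automatically and makes mixing over $X_0$ trivial by linearity. The paper instead extends Peluchetti's mixture theorem by checking, in strong form, that the zero-flux condition survives mixing. That implicitly needs densities regular up to $\partial D$ and an exchange of $\nabla_x\cdot$ with the mixture integral. Both routes need uniqueness for the Neumann Fokker--Planck problem; you state this explicitly, while the paper leaves it to the ``mild conditions'' of Peluchetti's theorem.

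\textbf{Minor points.}
\begin{itemize}
\item Your localization to $[0,1-\varepsilon]$ is an extra precaution. The paper sidesteps it by assuming $\varphi_{t\mid 0}\in C^{1,2}$ on $[0,1]\times D$.
\item In (ii) you take $\bM$ with diffusion coefficient $\sigma$ without comment. The paper notes that finiteness of $\KL(\Pi\mid\mid\bM)$ is what forces the diffusion coefficients to agree.
\end{itemize}
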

The full proof is in Appendix~\ref{sec:appendix-proofs}.
In summary, the major difference compared with \citet{shi2023diffusion} is the treatment of the Doob \(h\)-transform, which is used in retrieving the dynamics of the bridge process \(\Pi_{\mid 0}\) conditioned on the starting point.
Whereas \citet{shi2023diffusion} produce this dynamic from the infinitesimal generator \citep{palmowski2002technique}, we use Girsanov's theorem to circumvent the issue of identifying the domain of the generator in the constrained regime.
The remainder of the proof follows \citet{shi2023diffusion}, adapting the tools used in their proof to the reflected setting as needed.

Under the corresponding assumptions required for IMF convergence, Proposition~\ref{prop:markov-proj-refl-main} allows the convergence argument of \citet{shi2023diffusion} to be carried over to the reflected setting.
We are then able to state the implementation, which is identical to \(\alpha\)-DSBM except for the loss function and sampling.
The main idea of DSBM is to iterate the IMF, where at each Markovian projection step, \(v^*\) in \eqref{eq:kl-div-for-markov-proj} is approximated with a neural network \(v_\theta\) trained on the objective of minimizing this KL divergence.
The subsequent reciprocal projection is sampled from by first drawing a starting point \(X_0 \sim \Pi_0 = \pi_0\) from the training data, sampling \(X_1 \mid X_0\) from the approximate Markovian projection
\begin{equation}
    \label{eq:def-sde-of-approx-refl-markov-proj}
    X_1 = X_0 + \int_0^1 (\mu(t, X_t) + v_\theta(t, X_t)) dt + \int_0^1 \sigma(t) dW_t + \int_0^1 n(X_t) dL_t.
\end{equation}
with Euler--Maruyama, then sampling a bridge point \(X_t \mid X_0, X_1 \sim \bQ^r_{t \mid 0, 1}(\, \cdot \mid X_0, X_1)\).

We follow the extension \(\alpha\)-DSBM, where only one gradient step is taken at each Markovian projection step in an online fashion.
Whereas the forward and backward objectives can be alternated, we here present the case of a bidirectional model where the direction is passed together with the time \(t\) as a conditional parameter, training on the forward and backward objective simultaneously.
This gives rise to the loss
\begin{equation}
    \label{eq:rsbm-bidirectional-loss}
    \begin{gathered}
        L(\theta ; \Pi) = \frac{1}{2} \left( L_{\to}(\theta ; \Pi) + L_{\gets}(\theta ; \Pi) \right), \quad \text{ where} \\
        \begin{aligned}
            L_{\to}(\theta ; \Pi) &= \int_0^1 \frac{1}{\sigma^2(t)}\bE_{\Pi_{t, 1}}\left[\| \sigma^2(t)  \nabla \log \bQ^r_{1 \mid t}(X_1 \mid X_t) - v_\theta(t, X_t; \to) \|^2\right] dt, \\
            L_{\gets}(\theta ; \Pi) &= \int_0^1 \frac{1}{\sigma^2(t)}\bE_{\Pi_{0, t}}\left[\| \sigma^2(t)  \nabla \log \bQ^r_{t \mid 0}(X_t \mid X_0) - v_\theta(1-t, X_t; \gets) \|^2\right] dt.
        \end{aligned}
    \end{gathered}
\end{equation}

The \(\alpha\)-RSBM algorithm is described in \cref{alg:rsbm} \citep[cf.][Algorithm~1]{debortoli2024schrodinger}.
The RSBM algorithm, corresponding to \(\alpha = 1\), can be analogously extended from DSBM \citep[Algorithm~1]{shi2023diffusion}.
In practice, an exponential moving average of the weights is tracked and used in the Euler--Maruyama sampling step in \cref{alg:rsbm} to improve sample quality \citep{song2020improved}.

\begin{algorithm}
    \caption{\(\alpha\)-Reflected Schrödinger Bridge Matching}\label{alg:rsbm}
    \begin{algorithmic}
        \REQUIRE distributions \(\pi_0\), \(\pi_1\), reference process \(\bQ^r\);
            training parameters \(N_{\text{pretrain}}\), \(N_{\text{finetune}}\), half batch size \(b\); pretrain optimizer, finetune optimizer with learning rate \(\alpha\)
        \FOR{\(n = 1, \dots, N_{\text{pretrain}}\)}
            \STATE Sample \(2b\) endpoint pairs \((X_0, X_1) \sim \pi_0 \otimes \pi_1\)
            \STATE Sample \(2b\) time steps \(t \sim \operatorname{Unif}(0, 1)\)
            \STATE Sample \(2b\) bridge points \(X_t \mid X_0, X_1 \sim \bQ^r_{t \mid 0, 1}(\, \cdot \mid X_0, X_1)\)
            \STATE Gradient step on empirical version of loss \eqref{eq:rsbm-bidirectional-loss}, each half-batch in one direction
        \ENDFOR
        \FOR{\(n = 1, \dots, N_{\text{finetune}}\)}
            \STATE Sample \(b\) starting points \(X_0 \sim \pi_0, X_1 \sim \pi_1 \) each
            \STATE Sample \(b\) endpoints \(\hat{X}_1 \mid X_0\) by \eqref{eq:def-sde-of-approx-refl-markov-proj}
            \STATE Sample \(b\) endpoints \(\hat{X}_0 \mid X_1\) by reverse-time version of \eqref{eq:def-sde-of-approx-refl-markov-proj}
            \STATE Sample \(2b\) time steps \(t \sim \operatorname{Unif}(0, 1)\)
            \STATE Sample \(b\) bridge points \(X_t \mid X_0, \hat{X}_1 \sim \bQ^r_{t \mid 0, 1}(\, \cdot \mid X_0, \hat{X}_1)\), compute empirical \(L_{\to}\)
            \STATE Sample \(b\) bridge points \(X_t \mid \hat{X}_0, X_1 \sim \bQ^r_{t \mid 0, 1}(\, \cdot \mid \hat{X}_0, X_1)\), compute empirical \(L_{\gets}\)
            \STATE Gradient step on total empirical loss \eqref{eq:rsbm-bidirectional-loss}
        \ENDFOR
    \end{algorithmic}
\end{algorithm}

In its general form, the algorithm contains three elements that each poses a computational challenge:
First, the algorithm requires sampling an endpoint \(\hat{X}_1\) of the Markovian projection \eqref{eq:def-sde-of-approx-refl-markov-proj} (and \(\hat{X}_0\), respectively) at each finetune step.
Second, the same goes for sampling a bridge point from the reciprocal distribution, i.e. given the starting point \(X_0\) and the endpoint \(\hat{X}_1\) sampling from \(\bQ^r_{t \mid 0, 1}\).
Third, the loss contains the score function \(\nabla \log \bQ^r_{1 \mid t}(x_1 \mid x_t)\) (and the reverse-time counterpart), which is in general not available in closed form.
In the non-reflected case, the last two points are made tractable by letting the reference process be a scaled Brownian motion \(\sigma W\).
Accordingly, we deal with these in the reflected case by restricting \(\bQ^r\) to reflected scaled Brownian motions in the unit cube.

\subsection{The score function}
When \(\bQ\) is a scaled Brownian motion with no reflection, i.e. \(\mu(t, x) = 0\) and \(\sigma(t) = \sigma\) in \eqref{eq:def-DSBM-Q-SDE}, the transition density is a Gaussian and the score function is simply
\[\nabla_{x_s} \log \bQ_{t \mid s}(x_t \mid x_s) = \frac{x_t - x_s}{\sigma^2 \, (t - s)}.\]
For a reflected Brownian motion on the unit cube, we can approximate the score cheaply.
Let $R_D: D \times \bR^d \to D$ be the reflection operator for the domain $D$, which computes the endpoint of the reflection of the line from $x$ to $y$.
For $D = [0, 1]$, $R_{[0,1]}$ only depends on $y$, and is given by
\begin{equation}\label{eq:G-reflection-operator}
    R_{[0,1]}(y) = \begin{cases}
        \phantom{(-} y  & \operatorname{mod} 2, \ \text{if this is in $[0, 1]$,} \\
                 (-  y) & \operatorname{mod} 2, \ \text{otherwise}.
    \end{cases}
\end{equation}
This also holds for $D = [0,1]^d$ for any dimension $d$, and we denote this operator simply as $R$.
As in \citet{lou2023reflected}, we have for the unit cube that 
\begin{equation}
    \label{eq:reflected-transition-density}
    \bQ^r_{t \mid s}(x_t \mid x_s) = \sum_{y_t \in \calI(x_t)} \bQ_{t \mid s}(y_t \mid x_s), \quad t > s,
\end{equation}
where $\mathcal{I}(x)$ is the set of points $y$ in $\bR^d$ such that $R(y) = x$ (the inverse image of $R$).
In practice, we truncate the infinite set $\mathcal{I}(x)$ at a suitable distance.
\citet{lou2023reflected} propose a hybrid approach to the reflected transition density, using the sum-of-Gaussians above for small \(\sigma\) and for larger \(\sigma\) an expansion in Laplacian eigenfunctions.
We argue that sum-of-Gaussians is sufficient in our case, since we do not rely on variance-exploding dynamics.

Note that this set grows exponentially with dimension $d$.
However, when the reference process has independent components, we can instead use the factorization
\begin{equation}
    \bQ^r_{t \mid s}(x_t \mid x_s) 
    = \prod_{i=1}^d \bQ^{r, i}_{t \mid s}(x_{t}^{i} \mid x_s^i) 
    = \prod_{i=1}^d \sum_{y_t^i \in \mathcal{I}(x_t^i)} \bQ^{i}_{t \mid s}(y_{t}^{i} \mid x_s^i).
\end{equation}
Thus, we can compute the Doob $h$-transform component-wise as 
\begin{equation}
    (\nabla_{x_s} \log \bQ^r_{t \mid s}(x_t \mid x_s))_i 
    = \frac{d}{dx_s^i} \log \bQ^{r, i}_{t \mid s}(x_{t}^{i} \mid x_s^i)
    = \frac{\sum_{y_t^i \in \mathcal{I}(x_t^i)} \frac{d}{dx_s^i} \bQ^{i}_{t \mid s}(y_{t}^{i} \mid x_s^i)}{\bQ^{r, i}_{t \mid s}(x_{t}^{i} \mid x_s^i)}.
\end{equation}

\subsection{Efficient sampling of the reflected bridge}
Bridge sampling is in general not trivial either.
It is possible to use Doob's \(h\)-transform which yields an SDE with correct marginal distribution.
In some cases, as with a Brownian bridge, this SDE can be solved analytically and an efficient sampling strategy can exist.
Otherwise, one is left with sampling the SDE using the Euler--Maruyama method.
Here, we describe an efficient sampler for the reflected Brownian bridge and prove its validity. 

\begin{algorithm}
\caption{Bridge midpoint sampling for RBM}
\label{alg:midpoint-sample-RBM}
\begin{algorithmic}[1]
\REQUIRE $x_0, x_1 \in D, t \in [0, 1]$
\ENSURE $X_t \sim \bQ^r_{t\mid 0, 1}(\, \cdot \mid X_0 = x_0, X_1 = x_1$)
\STATE Draw $Y_1$ from $\calI(x_1)$ with probabilities proportional to $\bQ_{1 \mid 0}(y_1 \mid x_0)$
\STATE Draw $Y_t \mid Y_1 \sim \calN(tY_1 + (1-t)x_0, t(1-t)\sigma^2 I_d)$
\STATE \textbf{return} $R(Y_t)$
\end{algorithmic}
\end{algorithm}

\begin{proposition}
    \label{prop:bridge-sampling}
    Algorithm~\ref{alg:midpoint-sample-RBM} correctly samples the RBM bridge midpoints.
\end{proposition}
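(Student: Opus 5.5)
The plan is to verify directly that the density of the output $R(Y_t)$ equals the bridge density
\[
\bQ^r_{t\mid 0,1}(x_t \mid x_0,x_1) = \frac{\bQ^r_{t\mid 0}(x_t\mid x_0)\,\bQ^r_{1\mid t}(x_1\mid x_t)}{\bQ^r_{1\mid 0}(x_1\mid x_0)},
\]
which follows from the Markov property of the reflected Brownian motion. The key structural fact is the one behind \eqref{eq:reflected-transition-density}: the RBM on $D=[0,1]^d$ can be realised as $X_t = R(B_t)$, where $B$ is a Brownian motion $\sigma W$ started at $x_0 \in D$. Indeed, $R$ is the quotient map of $\bR^d$ by the group generated by reflections in the hyperplanes $\{y^i \in \mathbb{Z}\}$, and Brownian motion is invariant under this group. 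Hence the image process is Markov with transition density given by \eqref{eq:reflected-transition-density}, and in particular it equals $\bQ^r$ in law. I take this identification, which the paper already uses, as given.

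First I would compute the joint law of $(R(B_t),R(B_1))$ for the free Brownian motion started at $x_0$. For $x_t,x_1\in D$, summing over preimages gives
\[
p(x_t,x_1)=\sum_{y_t\in\calI(x_t)}\sum_{y_1\in\calI(x_1)} \bQ_{t\mid 0}(y_t\mid x_0)\,\bQ_{1\mid t}(y_1\mid y_t).
\]
Since $X=R(B)$ in law, this is also $\bQ^r_{t,1\mid 0}(x_t,x_1\mid x_0)$. Conditioning on $R(B_1)=x_1$, whose density is $\sum_{y_1\in\calI(x_1)}\bQ_{1\mid 0}(y_1\mid x_0)=\bQ^r_{1\mid 0}(x_1\mid x_0)$, yields the bridge density
\[
\bQ^r_{t\mid 0,1}(x_t\mid x_0,x_1)=\frac{p(x_t,x_1)}{\bQ^r_{1\mid 0}(x_1\mid x_0)}.
\]

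Second, I would compute the law of the algorithm's output. Step 1 picks $Y_1=y_1\in\calI(x_1)$ with probability $\bQ_{1\mid 0}(y_1\mid x_0)/\bQ^r_{1\mid 0}(x_1\mid x_0)$. Step 2 draws $Y_t$ from the Brownian bridge from $x_0$ to $y_1$, whose density is $\bQ_{t\mid 0}(y_t\mid x_0)\,\bQ_{1\mid t}(y_1\mid y_t)/\bQ_{1\mid 0}(y_1\mid x_0)$; this Gaussian form is the standard one quoted after \eqref{eq:dSB-matching-Brownian-target}. Pushing forward by $R$ sums this density over $y_t\in\calI(x_t)$, since $R$ is locally an isometry off a null set, so the Jacobian factor is one. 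Multiplying the two steps, the factor $\bQ_{1\mid 0}(y_1\mid x_0)$ cancels. Summing over $y_1$ and $y_t$ then gives exactly $p(x_t,x_1)/\bQ^r_{1\mid 0}(x_1\mid x_0)$, which matches the bridge density from the first step.

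The main obstacle is justifying the identification of $\bQ^r$ with $R(B)$ at the level of finite-dimensional laws, rather than only one-step transition densities. The joint density in the first step needs the Markov property of $R(B)$. This reduces to the reflection-group invariance of the Gaussian kernel, $\sum_{y\in\calI(x)}\bQ_{1\mid t}(y\mid g z)$ being independent of the group element $g$. The remaining technical points are minor: absolute convergence of the series, which follows from the Gaussian tails and justifies the interchanges of summation, and the measure-zero boundary set where $R$ is not a local isometry.
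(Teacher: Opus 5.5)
Your proposal is correct and follows essentially the paper's route. Both arguments realise the RBM as the folded Brownian motion $R(W)$ and lift to the free Brownian motion, where step 1 samples the posterior of the unfolded endpoint over $\calI(x_1)$, step 2 is the ordinary Brownian bridge, and step 3 pushes forward by $R$. The paper phrases this as Bayes' rule along the chain $RW_1 \to W_1 \to W_t \to RW_t$ conditional on $W_0$. You instead verify it by an explicit density computation summing over preimages, which also makes the unit-Jacobian, series-convergence and Markov-property points explicit; the paper takes the last of these as given.
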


\begin{proof}
    Let \(W\) be a Brownian motion.
    Since $\{R W_t\}_t$ has the same distribution as the reflected Brownian motion $X$, the Markov chains $X_0 \to X_t \to X_1$ and $R W_0 \to R W_t \to R W_1$ have the same joint distribution, and we may work with the latter.
    Working conditionally on $W_0$, we further have the Markov chain $R W_1 \to W_1 \to W_t \to R W_t$.
    To sample $W_1 \mid R W_1 = x_1$, use Bayes' rule (also conditioning on $W_0$):
    \begin{equation}
    \begin{split}
        p_{W_1 | R W_1, W_0}(y_1 | x_1, x_0) &\propto p_{R W_1 | W_1, W_0}(x_1 | y_1, x_0) p_{W_1 | W_0}(y_1 | x_0) = \delta_{R y_1}(x_1) \bQ_{1 \mid 0}(y_1 \mid x_0),
    \end{split}
    \end{equation}
    where $\delta$ is the Dirac delta.
    With respect to the counting measure on the set $\calI(x_1) = R^{-1}(x_1)$ instead, this is then proportional to \(\bQ_{1 \mid 0}(y_1 \mid x_0)\), justifying step 1 of Algorithm~\ref{alg:midpoint-sample-RBM}.
    Step 2 is simply sampling a standard Brownian bridge to get $W_t \mid W_0 = x_0, W_1 = y_1$.
    Step 3 only requires applying $R$ to this sample.
\end{proof}

\subsection{Limitations and comparison with existing reflected SB framework}
\label{subsec:limitations}
As presented in the two previous subsections, we impose two limitations to make RSBM tractable:
The reference process is a reflected Brownian motion, i.e. the drift \(\mu\) in \eqref{eq:def-Qr-SDE} must be zero, and the domain D needs to be the unit cube.
Not allowing drift in the reference process excludes, for example, the family of variance-preserving SDEs commonly used in diffusion modeling.
This is not to say that it would not be possible, but the current theory does not support it yet.
The domain restriction can be loosened if the support can mapped with a diffeomorphism to the unit cube.
As \citet{deng2024reflected} show for a simplex domain, however, this might lead to a blow-up of the transformed true distribution in certain regions, making estimation harder.

Neither of these restrictions are needed in the reflected SB framework of \citet{deng2024reflected}, which allows more general processes and domains.
This comes at a cost of having the loss depend on the divergence of the score network, requiring two network propagations for each gradient step.
The method requires SDE sampling during training, like RSBM, but in addition also relies on caching the entire path rather than just start and endpoints.

\citet{deng2024reflected} also bring up some limitations of their method in high dimension.
They conjecture that simpler domains will still be required due to the curse of dimensionality.
The high-dimensional experiments on image data are also said to be reliant on a good initialization of the score network, which is achieved by pretraining on the reflected diffusion model objective of \citet{lou2023reflected}.
This effectively means, as for RSBM, the variance-exploding SDE is selected as reference process.
For a variance-preserving reference process, \citet{deng2024reflected} suggest that a non-reflected variance-preserving diffusion model would be a good initialization candidate.
In both cases, however, one of the marginal distributions is a noise prior (uniform or truncated Gaussian).
No discussion is provided on what form of pretraining might allow for unpaired image-to-image sampling, as opposed to noise-to-image.

\section{Experiments}
\label{sec:experiments}

We evaluate the proposed \(\alpha\)-RSBM model by comparing it to the non-reflected counterpart \(\alpha\)-DSBM on two image translation problems, where support is naturally constrained to pixel values in \([0, 1]\) for grayscale or \([0, 1]^3\) for RGB images.
Specifically, we approximate an SB between MNIST \citep[Unspecified license]{mnist} and the first five letters (A--E, a--e) of EMNIST \citep[CC BY-ND 4.0]{emnist}, as well as between the classes \texttt{cat} and \texttt{wild} in the AFHQv2 dataset \citep[CC BY-NC 4.0 license]{choi2020starganv2}.
MNIST and EMNIST are both 28~\(\times\)~28 grayscale pixels.
For AFHQ we downsample to a 64~\(\times\)~64 resolution.

We follow the experimental setup of \citet{debortoli2024schrodinger} using a reimplementation of their model.
For the model, we use the U-Net implementation of \citet[MIT License]{dhariwal2021diffusion}, which we augment by concatenating the directional embedding to the time embedding.
The directional embedding is defined in the same way as the time embedding, namely a two-layer MLP on top of sinusoidal embeddings.
These are then concatenated and used to FiLM-modulate \citep{perez2018FiLM} GroupNorm layers in the residual blocks.
The parameters to the models and those used in training are described in more detail in Appendix~\ref{sec:appendix-training-details}.

For quantitative evaluation, we compare the generative performance by computing the FID between the training data of \(\Pi_1\) and synthetic samples started at the test data from \(\Pi_0\), and vice versa.
One synthetic sample is generated from each starting point in the test set.
MNIST contains 60,000 training samples and 10,000 test samples.
For EMNIST we used the split \texttt{letters}, yielding 24,000 training and 4,000 test samples.
In AFHQ, the \texttt{cat} and \texttt{wild} classes contained, respectively, 5065 and 4593 training images, 493 and 483 test images.

Since a Schrödinger bridge is an entropy-regularized optimal transport, a generated sample from one distribution is under moderate regularization expected to resemble the starting point from the other distribution.
Qualitatively, this is observed for both \(\alpha\)-DSBM and \(\alpha\)-RSBM as seen in \cref{fig:afhq64-mnist-random-samples}, especially for AFHQ.
Start- and end-point similarity is quantified by the (per dimension) mean square distance (MSD) of pixel values.
For AFHQ, LPIPS \citep[learned perceptual image patch similarity, ][]{lpips} is also computed in a similar fashion to quantify this likeness with respect to closeness of AlexNet feature representations.
The comparison between \(\alpha\)-DSBM and \(\alpha\)-RSBM is presented in \cref{tab:image-eval-scores}.
Overall, the reflected model preserves more likeness to the source image in all cases except in terms of LPIPS for \texttt{cat} \(\to\) \texttt{wild}.
For generative quality, it is also better in all cases except \texttt{wild} \(\to\) \texttt{cat}.
As \cite{debortoli2024schrodinger} points out, one should not draw too strong conclusions about performance from the FID alone in these cases, as MNIST and EMNIST are black-and-white, and because of the relatively small test sample size for AFHQ.

Some randomly chosen generated samples are shown in \cref{fig:afhq64-mnist-random-samples}, with more examples presented in \cref{fig:mnist-more-random-samples-incl-top-oob} and \cref{fig:afhq64-more-random-samples-incl-top-oob} in Appendix~\ref{sec:appendix-additional-results}.
There, the images with highest count of invalid entries are also presented.
Comparing qualitatively clipped samples from \(\alpha\)-DSBM with samples from \(\alpha\)-RSBM for MNIST \(\leftrightarrow\) EMNIST, the models are fairly equal in terms of visual quality. We are also able to discern a stronger similarity between start and endpoint of the reflected~model~as~indicated~by~the~MSD.

Also for AFHQ, the models produce samples of similar quality.
In many cases, the samples generated from the same starting point and with the same seed look very similar between the two models. 
In some cases, one model produces slightly better-looking samples than the other and vice versa, with no obvious majority.
What we do want to draw attention to is the background, which seems to be more often matched by \(\alpha\)-RSBM.
This is most noticeable in the samples with highest violation count, but can also be seen in some of the randomly chosen samples in \cref{fig:afhq64-more-random-samples-incl-top-oob}.

More details of how the experiments were performed are laid out in Appendix~\ref{sec:appendix-training-details}.
We wish to highlight that training \(\alpha\)-RSBM is only marginally slower than its non-reflected counterpart, with MNIST-to-EMNIST taking 32~h vs. 31.5~h (on a single 40GB A100) and AFHQ taking 109~h vs. 108~h (on four 40GB A100s).
The code for reproducing the experiments is available at \url{https://github.com/viktor765/rsbm}.

We conclude this section by providing some statistics for pixel value violations, to get a sense of the scale of the problem of sampling out-of-bounds with \(\alpha\)-DSBM.
The counts are in relation to the total dimension of the distributions, being number of pixels for MNIST/EMNIST and number of pixel-channel entries (three times the pixel count) for AFHQ.
For the former, 100\% of the generated samples had at least one pixel out-of-bounds, with an average of 46\% (MNIST) and 34\% (EMNIST) of pixels out of bounds.
The average magnitude of a pixel violation (i.e. conditioned on the pixel being out of bounds) was around 0.002--0.003, with the maximum violation for an image averaging 0.013 (MNIST) and 0.008 (EMNIST).
For AFHQ, around 2\% of all pixel-channel entries were out of bounds.
Around 30\% of the images had more than 1\% of entries out of bounds.
For the threshold 5\%, the frequency was around 15\%.
The mean magnitude of the violation was 0.02 and the mean (over images) maximum violation was 0.03.
These statistics were similar for both training and test data.

\begin{figure}[t]
    \centering
    \begin{subfigure}{0.48\textwidth}
        \includegraphics[width=1\linewidth]{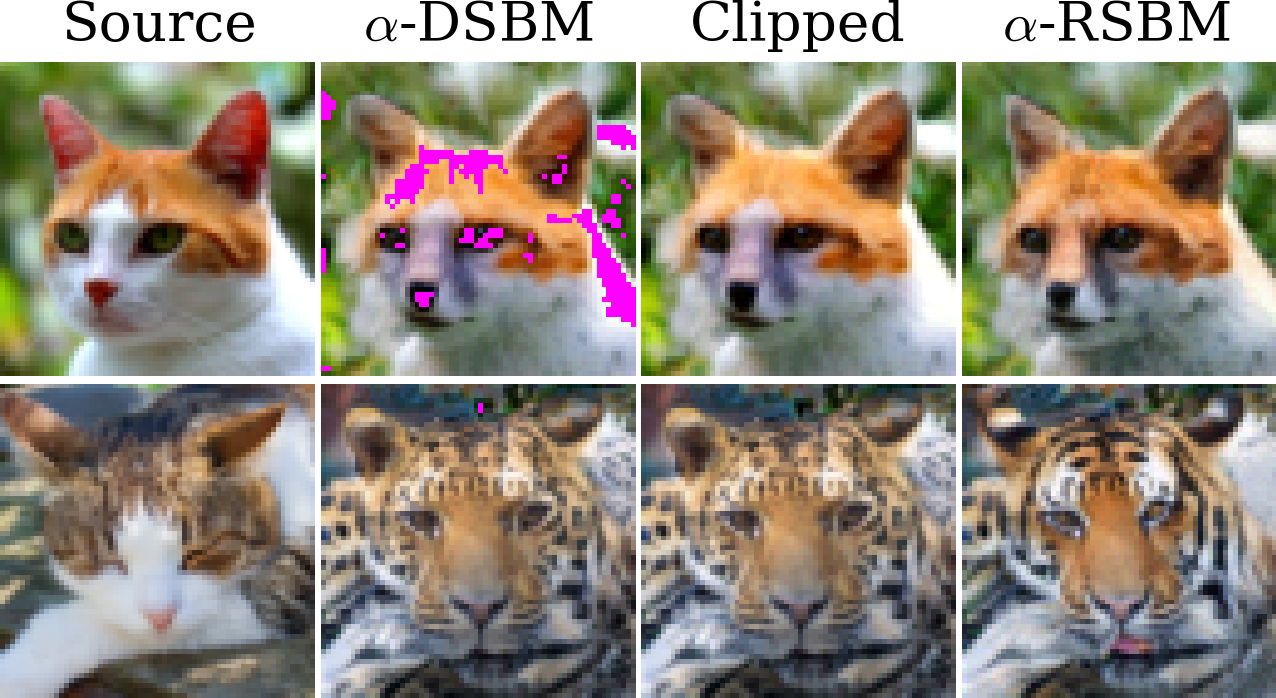}
        \caption{\texttt{cat} \(\to\) \texttt{wild}}
    \end{subfigure}
    \vspace{2mm}
    \hfill
    \begin{subfigure}{0.48\textwidth}
        \includegraphics[width=1\linewidth]{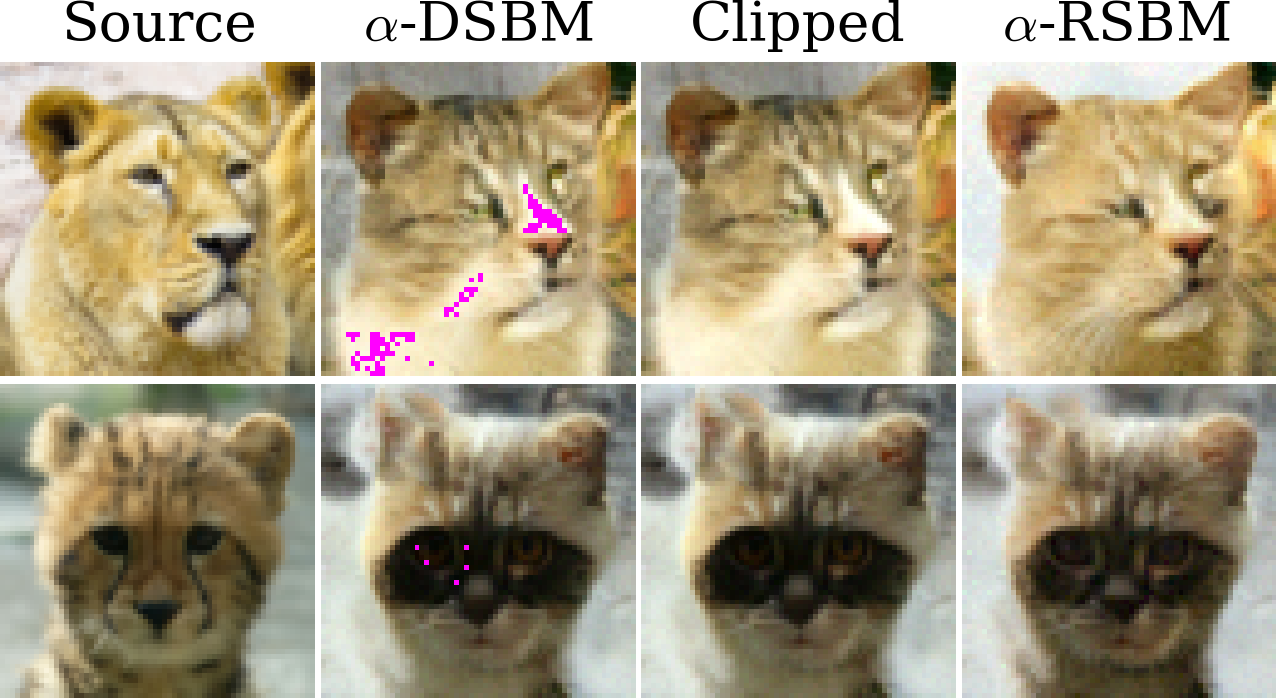}
        \caption{\texttt{wild} \(\to\) \texttt{cat}}
    \end{subfigure}
    \begin{subfigure}{0.48\textwidth}
        \includegraphics[width=1\linewidth]{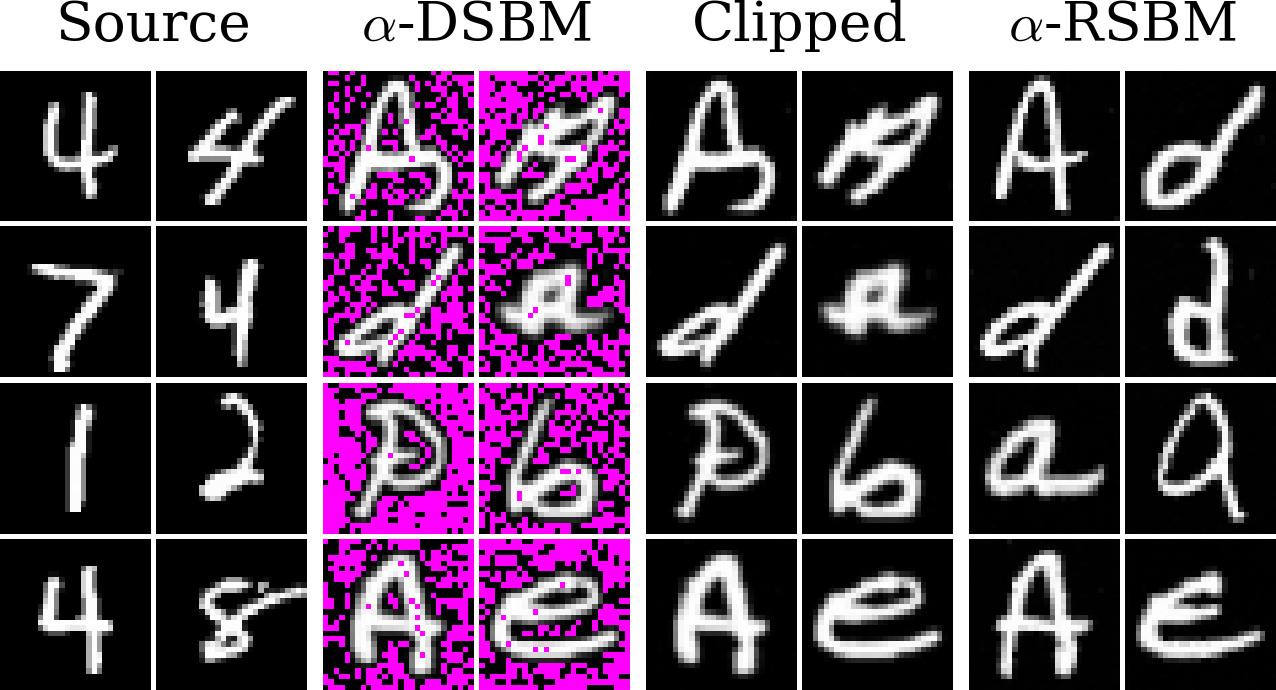}
        \caption{\texttt{MNIST} \(\to\) \texttt{EMNIST}}
    \end{subfigure}
    \hfill
    \begin{subfigure}{0.48\textwidth}
        \includegraphics[width=1\linewidth]{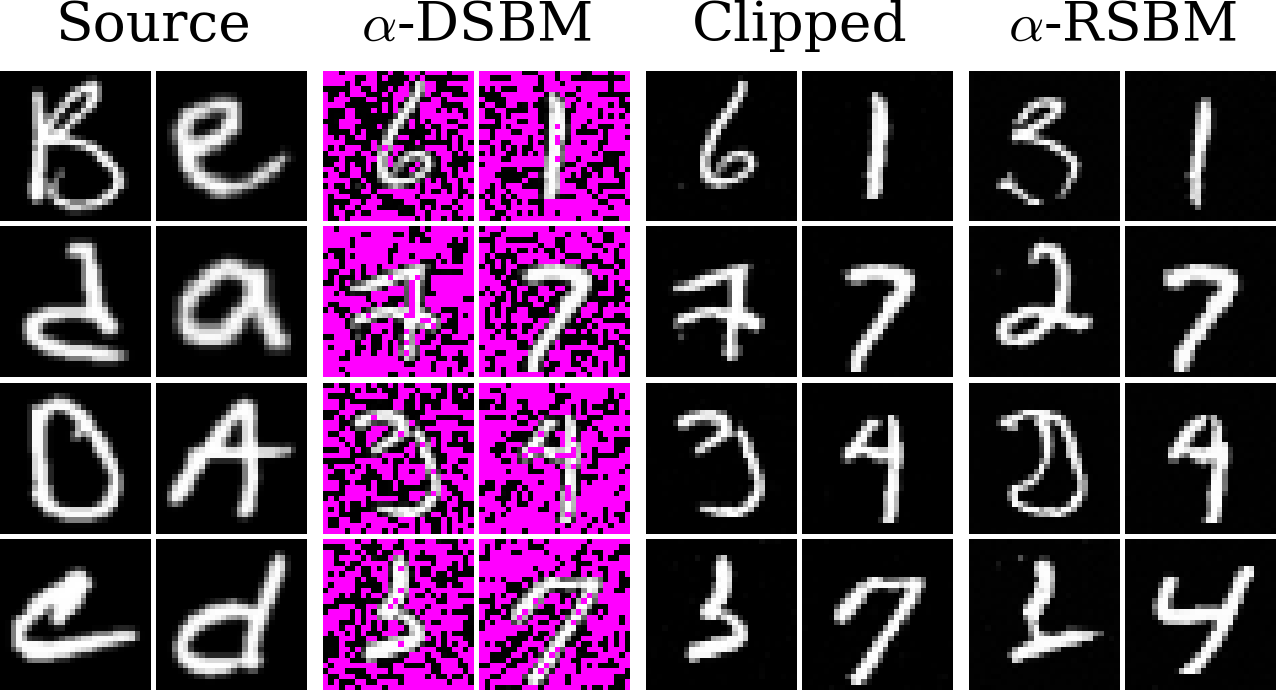}
        \caption{\texttt{EMNIST} \(\to\) \texttt{MNIST}}
    \end{subfigure}
    \caption{
        Transports between classes in AFHQ (\(64 \times 64\)) and MNIST/EMNIST.
        For each direction, the first column contains the initial (true) sample, followed by generated samples from \(\alpha\)-DSBM, clipped \(\alpha\)-DSBM and \(\alpha\)-RSBM.
        The out-of-bounds pixels in the \(\alpha\)-DSBM are colored magenta.
        The two rows correspond to two different samples picked at random from the test data.
    }
    \label{fig:afhq64-mnist-random-samples}
\end{figure}

\begin{table}[t]
    \caption{Evaluation metrics for the image-to-image syntheses. For \(\alpha\)-DSBM, the MSD is computed for the unclipped samples, whereas both FID and LPIPS require valid pixel values so the clipped samples are used.}
    \label{tab:image-eval-scores}
    \centering
    \begin{tabular}{llccccc}
        \toprule
        & & \multicolumn{2}{c}{MNIST \(\leftrightarrow\) EMNIST} & \multicolumn{3}{c}{AFHQ (\(64 \times 64\))} \\
        \cmidrule(lr){3-4} \cmidrule(lr){5-7}
        & & MSD \textdownarrow & FID\textdownarrow & MSD\textdownarrow & LPIPS\textdownarrow & FID\textdownarrow \\
        \midrule
        \multirow{2}{*}{\(\Pi_0 \to \Pi_1\)} & \(\alpha\)-DSBM & 0.416 & 6.41 & 0.131 & 0.239 & 27.6 \\
                                                & \(\alpha\)-RSBM & 0.356 & 5.22 & 0.084 & 0.248 & 26.1 \\
        \cmidrule(l){1-2}
        \multirow{2}{*}{\(\Pi_0 \gets \Pi_1\)} & \(\alpha\)-DSBM & 0.415 & 8.58 & 0.200 & 0.302 & 31.5 \\
                                                & \(\alpha\)-RSBM & 0.362 & 8.23 & 0.082 & 0.244 & 33.8 \\
        \bottomrule
    \end{tabular}
\end{table}

\section{Discussion}

\vspace{-0.6em}
We have shown in this work that the IMF method of \citet{shi2023diffusion} can be practically applied in the setting of reflected DSBs, and that the theoretical properties are preserved.
This allows learning high-dimensional DSBs on constrained domains such as $[0, 1]^d$ with the partially simulation-free framework of flow-matching and rectified flows \citep{liu2022flow}, guaranteeing that the bounded supports of the distributions are maintained.
Importantly, we demonstrate that the overhead of our method is negligible compared to \citet{shi2023diffusion} and \citet{debortoli2024schrodinger}, effectively establishing our method as a cheap way of training reflected DSBs.

Extensions such as more complicated geometries and reference dynamics are left to future work, but conceptually, the main requirement is to implement efficient methods of computing the Doob $h$-transform term of the process, and marginally sampling its bridge processes.
In the best case, these are reachable by analytical formulas and sampling schemes, but in general cases one may instead attempt to rely on neural bridge samplers and $h$-transform approximators \citep{de2021simulating, baker2024score}.
Alternatively, we may rely on solving PDEs for the $h$-transforms as in \citet{lou2023reflected}.
\enlargethispage{\baselineskip}

\section*{Acknowledgements}
This work was partially supported by the Wallenberg AI, Autonomous Systems and Software Program
(WASP) funded by the Knut and Alice Wallenberg
Foundation.

The computations were enabled by the Berzelius resource provided by the Knut and Alice Wallenberg Foundation at the National Supercomputer Centre.

\bibliographystyle{plainnat}
\bibliography{refs}

@article{bernton2022entropic,
  title={Entropic optimal transport: Geometry and large deviations},
  author={Bernton, Espen and Ghosal, Promit and Nutz, Marcel},
  journal={Duke Mathematical Journal},
  volume={171},
  number={16},
  pages={3363--3400},
  year={2022},
  publisher={Duke University Press}
}

@article{mikami2004monge,
  title={{Monge}’s problem with a quadratic cost by the zero-noise limit of h-path processes},
  author={Mikami, Toshio},
  journal={Probability Theory and Related Fields},
  volume={129},
  number={2},
  pages={245--260},
  year={2004},
  publisher={Springer}
}

@book{pilipenko2014introduction,
  title={An introduction to stochastic differential equations with reflection},
  author={Pilipenko, Andrey},
  year={2014},
  publisher={Universit{\"a}tsverlag Potsdam}
}

@inproceedings{caluya2021reflected,
  title={Reflected {Schr{\"o}dinger} bridge: Density control with path constraints},
  author={Caluya, Kenneth F and Halder, Abhishek},
  booktitle={2021 American Control Conference (ACC)},
  pages={1137--1142},
  year={2021},
  organization={IEEE}
}

@article{leonard2012schrodinger,
  title={From the {Schr{\"o}dinger} problem to the {Monge--Kantorovich} problem},
  author={L{\'e}onard, Christian},
  journal={Journal of Functional Analysis},
  volume={262},
  number={4},
  pages={1879--1920},
  year={2012},
  publisher={Elsevier}
}

@article{leonard2014survey,
  title={A survey of the {Schr\"{o}dinger} problem and some of its connections with optimal transport},
  author={L{\'e}onard, Christian},
  journal={Discrete Contin. Dyn. Syst.-A},
  volume={34},
  number={4},
  pages={1533--1574},
  year={2014}
}

@article{song2019generative,
  title={Generative modeling by estimating gradients of the data distribution},
  author={Song, Yang and Ermon, Stefano},
  journal={Advances in {Neural Information Processing Systems}},
  volume={32},
  year={2019}
}

@inproceedings{sohl2015deep,
  title={Deep unsupervised learning using nonequilibrium thermodynamics},
  author={Sohl-Dickstein, Jascha and Weiss, Eric and Maheswaranathan, Niru and Ganguli, Surya},
  booktitle={Proceedings of the 32nd International Conference on Machine Learning},
  pages={2256--2265},
  year={2015},
  organization={{PMLR}}
}

@article{ho2020denoising,
  title={Denoising diffusion probabilistic models},
  author={Ho, Jonathan and Jain, Ajay and Abbeel, Pieter},
  journal={Advances in {Neural Information Processing Systems}},
  volume={33},
  pages={6840--6851},
  year={2020}
}

@article{song2020score,
  title={Score-based generative modeling through stochastic differential equations},
  author={Song, Yang and Sohl-Dickstein, Jascha and Kingma, Diederik P and Kumar, Abhishek and Ermon, Stefano and Poole, Ben},
  journal={International Conference on Learning Representations},
  year={2021}
}

@article{bortoli2021diffusion,
  title={Diffusion {Schr{\"o}dinger} bridge with applications to score-based generative modeling},
  author={De Bortoli, Valentin and Thornton, James and Heng, Jeremy and Doucet, Arnaud},
  journal={Advances in {Neural Information Processing Systems}},
  volume={34},
  pages={17695--17709},
  year={2021}
}

@inproceedings{lipman2023flow,
title={Flow Matching for Generative Modeling},
author={Lipman, Yaron and Chen, Ricky T. Q. and Ben-Hamu, Heli and Nickel, Maximilian and Le, Matthew},
booktitle={International Conference on Learning Representations},
year={2023},
}

@article{liu2022flow,
  title={Flow straight and fast: Learning to generate and transfer data with rectified flow},
  author={Liu, Xingchao and Gong, Chengyue and Liu, Qiang},
  journal={International Conference on Learning Representations},
  year={2023}
}

@article{shi2023diffusion,
  title={Diffusion {Schr{\"o}dinger} bridge matching},
  author={Shi, Yuyang and De Bortoli, Valentin and Campbell, Andrew and Doucet, Arnaud},
  journal={Advances in {Neural Information Processing Systems}},
  volume={36},
  pages={62183--62223},
  year={2023}
}

@InProceedings{deng2024reflected,
  title={Reflected {Schr\"{o}dinger} Bridge for Constrained Generative Modeling},
  author={Deng, Wei and Chen, Yu and Yang, Nicole Tianjiao and Du, Hengrong and Feng, Qi and Chen, Ricky T. Q.},
  booktitle = 	 {Proceedings of the Fortieth Conference on Uncertainty in Artificial Intelligence},
  pages = 	 {1055--1082},
  year = 	 {2024},
 publisher =    {{PMLR}},
}

@inproceedings{lou2023reflected,
  title={Reflected diffusion models},
  author={Lou, Aaron and Ermon, Stefano},
  booktitle={Proceedings of the 40th International Conference on Machine Learning},
  pages={22675--22701},
  year={2023},
  organization={{PMLR}}
}

@article{chen2021likelihood,
  title={Likelihood training of {Schr\"{o}dinger} bridge using forward-backward {SDEs} theory},
  author={Chen, Tianrong and Liu, Guan-Horng and Theodorou, Evangelos A},
    journal={International Conference on Learning Representations},
  year={2022}
}

@article{cuturi2013,
  title={Sinkhorn distances: Lightspeed computation of optimal transport},
  author={Cuturi, Marco},
  journal={Advances in {Neural Information Processing Systems}},
  volume={26},
  year={2013}
}

@article{nilsson2025large,
  title={Large deviations for scaled families of {Schr\"odinger} bridges with reflection},
  author={Nilsson, Viktor and Nyquist, Pierre},
  journal={arXiv preprint arXiv:2506.03999},
  year={2025}
}

@book{rogers2000diffusions,
  title={Diffusions, {Markov} processes, and martingales},
  author={Rogers, L. Chris G. and Williams, David},
  volume={2},
  year={2000},
  publisher={Cambridge University Press}
}

@article{debortoli2024schrodinger,
  title={{Schr\"odinger} bridge flow for unpaired data translation},
  author={De Bortoli, Valentin and Korshunova, Iryna and Mnih, Andriy and Doucet, Arnaud},
  journal={Advances in {Neural Information Processing Systems}},
  volume={37},
  pages={103384--103441},
  year={2024}
}

@article{dhariwal2021diffusion,
  title={Diffusion models beat {GANs} on image synthesis},
  author={Dhariwal, Prafulla and Nichol, Alexander},
  journal={Advances in {Neural Information Processing Systems}},
  volume={34},
  pages={8780--8794},
  year={2021}
}

@article{karras2022elucidating,
  title={Elucidating the design space of diffusion-based generative models},
  author={Karras, Tero and Aittala, Miika and Aila, Timo and Laine, Samuli},
  journal={Advances in {Neural Information Processing Systems}},
  volume={35},
  pages={26565--26577},
  year={2022}
}

@article{albergo2023stochastic,
  title={Stochastic interpolants: A unifying framework for flows and diffusions},
  author={Albergo, Michael S and Boffi, Nicholas M and Vanden-Eijnden, Eric},
  journal={arXiv preprint arXiv:2303.08797},
  year={2023}
}

@article{liu2022rectified,
  title={Rectified flow: A marginal preserving approach to optimal transport},
  author={Liu, Qiang},
  journal={arXiv preprint arXiv:2209.14577},
  year={2022}
}

@article{peluchetti2023diffusion,
  title={Diffusion bridge mixture transports, {Schr{\"o}dinger} bridge problems and generative modeling},
  author={Peluchetti, Stefano},
  journal={Journal of Machine Learning Research},
  volume={24},
  number={374},
  pages={1--51},
  year={2023}
}

@Book{Schuss2013,
  author           = {Schuss, Zeev},
  year             = {2013},
  title            = {Brownian Dynamics at Boundaries and Interfaces: In Physics, Chemistry, and Biology},
  doi              = {10.1007/978-1-4614-7687-0},
  isbn             = {9781461476870},
  publisher        = {Springer New York},
  issn             = {2196-968X},
  journaltitle     = {Applied Mathematical Sciences},
}

@incollection{leonard2012girsanov,
  title={Girsanov theory under a finite entropy condition},
  author={L{\'e}onard, Christian},
  booktitle={S{\'e}minaire de Probabilit{\'e}s XLIV},
  pages={429--465},
  year={2012},
  publisher={Springer}
}

@inproceedings{perez2018FiLM,
  title={{FiLM}: Visual reasoning with a general conditioning layer},
  author={Perez, Ethan and Strub, Florian and de Vries, Harm and Dumoulin, Vincent and Courville, Aaron},
  booktitle={Proceedings of the {AAAI} Conference on Artificial Intelligence},
  year={2018}
}

@article{song2020improved,
  title={Improved techniques for training score-based generative models},
  author={Song, Yang and Ermon, Stefano},
  journal={Advances in {Neural Information Processing Systems}},
  volume={33},
  pages={12438--12448},
  year={2020}
}

@article{cattiaux1988time,
  title={Time reversal of diffusion processes with a boundary condition},
  author={Cattiaux, Patrick},
  journal={Stochastic Processes and Their Applications},
  volume={28},
  number={2},
  pages={275--292},
  year={1988},
  publisher={Elsevier}
}

@article{ho2022classifier,
  title={Classifier-free diffusion guidance},
  author={Ho, Jonathan and Salimans, Tim},
  journal={arXiv preprint arXiv:2207.12598},
  year={2022}
}

@misc{hajotthu_oresund_bridge_2015,
  author       = {{Hajotthu}},
  title        = {{Öresundbrücke nach Kopenhagen}},
  year         = {2015},
  month        = {May},
  day          = {29},
  howpublished = {Wikimedia Commons},
  url          = {https://commons.wikimedia.org/wiki/File:%C3%96resundbr%C3%BCcke_nach_Kopenhagen.JPG},
  urldate      = {2026-05-06},
  note         = {Licensed under Creative Commons Attribution-ShareAlike 4.0 International (CC BY-SA 4.0)}
}

@article{kingma2014adam,
  title={Adam: A method for stochastic optimization},
  author={Kingma, Diederik P and Ba, Jimmy},
  journal={arXiv preprint arXiv:1412.6980},
  year={2014}
}

@misc{mnist,
  author = {LeCun, Yann and Cortes, Corinna and Burges, Christopher J. C.},
  title = {{The MNIST Database of Handwritten Digits}},
  year = 1994,
  howpublished = {\url{http://yann.lecun.com/exdb/mnist/}},
}

@inproceedings{emnist,
  title={{EMNIST}: Extending {MNIST} to handwritten letters},
  author={Cohen, Gregory and Afshar, Saeed and Tapson, Jonathan and van Schaik, Andre},
  booktitle={2017 International Joint Conference on Neural Networks ({IJCNN})},
  pages={2921--2926},
  year={2017},
  organization={{IEEE}}
}

@inproceedings{choi2020starganv2,
  title={{StarGAN} v2: Diverse Image Synthesis for Multiple Domains},
  author={Yunjey Choi and Youngjung Uh and Jaejun Yoo and Jung-Woo Ha},
  booktitle={Proceedings of the {IEEE} Conference on Computer Vision and Pattern Recognition},
  year={2020}
}

@inproceedings{lpips,
  title={The unreasonable effectiveness of deep features as a perceptual metric},
  author={Zhang, Richard and Isola, Phillip and Efros, Alexei A and Shechtman, Eli and Wang, Oliver},
  booktitle={Proceedings of the {IEEE} Conference on Computer Vision and Pattern Recognition},
  pages={586--595},
  year={2018}
}

@article{de2021simulating,
    author = {Heng, Jeremy and De Bortoli, Valentin and Doucet, Arnaud and Thornton, James},
    title = {Simulating diffusion bridges with score matching},
    journal = {Biometrika},
    volume = {112},
    number = {4},
    year = {2025},
    doi = {10.1093/biomet/asaf048},
}

@article{baker2024score,
  title={Score matching for bridges without learning time-reversals},
  author={Baker, Elizabeth L and Schauer, Moritz and Sommer, Stefan},
  journal={arXiv preprint arXiv:2407.15455},
  year={2024}
}

@article{paszke2019pytorch,
  title={{PyTorch}: An imperative style, high-performance deep learning library},
  author={Paszke, Adam and Gross, Sam and Massa, Francisco and Lerer, Adam and Bradbury, James and Chanan, Gregory and Killeen, Trevor and Lin, Zeming and Gimelshein, Natalia and Antiga, Luca and others},
  journal={Advances in {Neural Information Processing Systems}},
  volume={32},
  year={2019}
}

@article{palmowski2002technique,
  title={A Technique for Exponential Change of Measure for {Markov} Processes},
  author={Palmowski, Zbigniew and Rolski, Tomasz},
  journal={Bernoulli},
  volume={8},
  number={6},
  pages={767--785},
  year={2002},
}

@book{legall2016,
  title={Brownian Motion, Martingales, and Stochastic Calculus},
  author={Le Gall, Jean-Fran{\c{c}}ois},
  year={2016},
  publisher={Springer}
}

@article{bjork2015pedestriansguidelocaltime,
      title={The Pedestrian's Guide to Local Time}, 
      author={Bj{\"o}rk, Tomas},
      year={2015},
      journal={arXiv preprint arXiv:1512.08912},
}

@article{fishman2023diffusion,
  title   = {Diffusion Models for Constrained Domains},
  author  = {Fishman, Nic and Klarner, Leo and De Bortoli, Valentin and Mathieu, Emile and Hutchinson, Michael},
  journal = {Transactions on Machine Learning Research},
  year    = {2023},
}

@article{fishman2023metropolis,
  title={Metropolis sampling for constrained diffusion models},
  author={Fishman, Nic and Klarner, Leo and Mathieu, Emile and Hutchinson, Michael and De Bortoli, Valentin},
  journal={Advances in {Neural Information Processing Systems}},
  volume={36},
  pages={62296--62331},
  year={2023}
}

@article{liu2023mirror,
  title={Mirror diffusion models for constrained and watermarked generation},
  author={Liu, Guan-Horng and Chen, Tianrong and Theodorou, Evangelos and Tao, Molei},
  journal={Advances in {Neural Information Processing Systems}},
  volume={36},
  pages={42898--42917},
  year={2023}
}

@article{christopher2024constrained,
  title={Constrained synthesis with projected diffusion models},
  author={Christopher, Jacob K and Baek, Stephen and Fioretto, Ferdinando},
  journal={Advances in {Neural Information Processing Systems}},
  volume={37},
  pages={89307--89333},
  year={2024}
}

@inproceedings{liu2023learning,
title={Learning Diffusion Bridges on Constrained Domains},
author={Liu, Xingchao and Wu, Lemeng and Ye, Mao and Liu, Qiang},
booktitle={International Conference on Learning Representations},
year={2023},
}

\clearpage
\appendix
\section{Theory and proofs}
\label{sec:appendix-proofs}
In this appendix, we provide Proposition~\ref{prop:markov-proj-refl-main}.
The proof is essentially a straightforward extension of that of \citet{shi2023diffusion}, but with a difference in treatment of the Doob h-transform.

We drop the superscript of the reflected path measures \(\bQ^r\) and the Markov class \(\calM^r\) from the main text.
Recall the reflected SDE \eqref{eq:def-Qr-SDE}:
\[dX_t = \mu(t, X_t) dt + \sigma(t, X_t) dW_t + n(X_t) dL_t, \quad X_0 \sim \bQ_0,\]
on a domain \(D \subseteq \bR^d\) with normal reflections.\footnote{We here allow \(\sigma\) to be matrix-valued (such that \(\sigma\sigma\T\) is positive definite) and a function of \(x\) in addition to \(t\). The dependency on \(x\) is omitted in the main text to preserve the problem symmetry when going from \(\pi_1\) to \(\pi_0\); in general an additional \(\nabla \sigma^2\) drift term appears in the reverse direction.}
Assume that the SDE has a weak solution under some filtered probability space \((\Omega, \calF, (\calF_t)_{t\in[0,1]}, \bP)\).
Let \(\bQ \coloneq X_\# \bP\) be its path measure.

For a given distribution \(\Pi_{0, 1}\) on \(D \times D\) we construct a path measure in the reciprocal class as \(\Pi = \bQ_{\mid 0, 1} \Pi_{0, 1}\). Assume \(\Pi_{0, 1}\) and \(\bQ_{0, 1}\) are equivalent, i.e., mutually absolutely continuous.

\subsection{Doob's h-transform for reflected bridge}
A central notion in the Markovian projection is that Doob's h-transform provides the means of constructing a Markov process from another Markov process which has been conditioned on ending (at time \(t=1\)) in a certain point.
A typical example is that of the Brownian bridge.

More generally, our aim is to, conditioned on \(\calF_0\), reweight the terminal distribution of \(\bQ\) so that it becomes \(\Pi_{1 \mid 0}\) while preserving the bridge distribution \(\bQ_{\mid 0, 1}\).
\citet{shi2023diffusion} achieve this via the infinitesimal generator, identifying the change in drift from the generator of the process of \(\bQ_{\mid 0}\) under the new measure \citep[see][]{palmowski2002technique}.
To avoid characterizing the domain of the new generator in the reflected case, we instead use a Girsanov approach.

Define on \(D \times D\)
\[\varphi_{1 \mid 0}(x_1 \mid x_0) \coloneq \frac{d\Pi_{1 \mid 0}}{d \bQ_{1 \mid 0}}(x_1 \mid x_0),\]
and note that
\[\frac{d \Pi_{\mid 0}}{d \bQ_{\mid 0}} = \varphi_{1 \mid 0}.\]
From this we create our martingale that allows us to use Girsanov's theorem:
\[\frac{d \Pi_{1 \mid 0} \vert_{\calF_t}}{d\bQ_{1 \mid 0}\vert_{\calF_t}} = \bE_{\bQ_{\mid 0}}\left[\frac{d \Pi_{1 \mid 0} }{d\bQ_{1 \mid 0}} \;\middle| \; \calF_t\right] = \bE_{\bQ_{1 \mid t}}[\varphi_{1 \mid 0}(X_1 \mid x_0) \mid X_t] \eqcolon \varphi_{t \mid 0}(X_t \mid x_0),\]
where the second equality holds since \(\bQ\) is Markov.
Also note that \(\varphi_{0 \mid 0} = 1\).

Denoting \(d\tilde{\bP}_{\mid 0} \coloneq \varphi_{1 \mid 0}(X_1 \mid x_0) d \bP_{\mid 0}\), we have the following result.
\begin{proposition}
    \label{prop:Pi-given-0-RSDE}
    Assume \((t, x) \mapsto \varphi_{t \mid 0}(x \mid x_0)\) is \(C^{1, 2}\) on \([0,1] \times D\) for all \(x_0 \in D\).
    Assuming also that the solution exists, then \(\Pi_{\mid 0}\) is Markov, corresponding to the process
    \begin{equation}
        \label{eq:sde-of-Pi-given-0}
        \begin{split}
            X_t = X_0 &+ \int_0^t \left(\mu(s, X_s) + \sigma\sigma\T(s, X_s) \bE_{\Pi_{1 \mid 0, s}}[\nabla \log \bQ^r_{1\mid s}(X_1 \mid X_s) \mid X_0, X_s]\right) ds \\
            &+ \int_0^t \sigma(s, X_s) d\widetilde{W}_s + \int_0^t n(X_s) dL_s,
        \end{split}
    \end{equation}
    where \(\widetilde{W}\) is a Brownian motion under \(\tilde{\bP}_{\mid 0}\).
\end{proposition}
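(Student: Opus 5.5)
We use Girsanov's theorem with the density process $Z_t \coloneq \varphi_{t\mid 0}(X_t\mid x_0)$, working under $\bP_{\mid 0}$, i.e.\ conditionally on $X_0=x_0$. By construction $Z_t = \bE_{\bQ_{\mid 0}}[\varphi_{1\mid 0}(X_1\mid x_0)\mid \calF_t]$, so $Z$ is a nonnegative martingale with $Z_0=\varphi_{0\mid 0}=1$ and $Z_1=\varphi_{1\mid 0}(X_1\mid x_0)$. Hence $\tilde\bP_{\mid 0}$ is a probability measure. Since $d\Pi_{\mid 0}/d\bQ_{\mid 0}=\varphi_{1\mid 0}(X_1\mid x_0)$, the law of $X$ under $\tilde\bP_{\mid 0}$ is exactly $\Pi_{\mid 0}$.

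\textbf{Step 1: dynamics of $Z$.} Apply Itô's formula to $Z_t=\varphi_{t\mid 0}(X_t\mid x_0)$ using the $C^{1,2}$ assumption and the RSDE for $X$:
\[
dZ_t = \big(\partial_t\varphi + \mathcal{L}_t\varphi\big)(t,X_t)\,dt + \nabla\varphi\T\sigma\,dW_t + \nabla\varphi\cdot n(X_t)\,dL_t .
\]
Because $Z$ is a martingale, the finite-variation part must vanish. The $dt$ and $dL_t$ terms are both continuous finite-variation processes, so their sum is zero. Consequently $dZ_t = Z_t\,\nabla\log\varphi_{t\mid 0}(X_t\mid x_0)\T\sigma\,dW_t$.

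This is the step where the reflection enters. We do not need to characterize the generator's domain, only this martingale argument. The side conditions are implicitly a Neumann condition $\nabla\varphi\cdot n=0$ on $\partial D$ (where $L$ charges) and the backward Kolmogorov equation in the interior.

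\textbf{Step 2: Girsanov.} The Girsanov theorem applied with the positive martingale $Z$ gives
\[
\widetilde W_t = W_t - \int_0^t \sigma\T(s,X_s)\nabla\log\varphi_{s\mid 0}(X_s\mid x_0)\,ds ,
\]
which is a Brownian motion under $\tilde\bP_{\mid 0}$. Positivity of $Z$ follows from the equivalence of $\Pi_{0,1}$ and $\bQ_{0,1}$ together with $\sigma>0$; if needed, localize with stopping times. The local time $L$ is a pathwise functional of $X$, so it is unchanged under the change of measure. Substituting $W$ in terms of $\widetilde W$ gives the RSDE with additional drift $\sigma\sigma\T\nabla\log\varphi_{t\mid 0}$. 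Markovianity of $\Pi_{\mid 0}$ then follows because the drift depends only on $(t,X_t)$, given $x_0$, and we assume uniqueness.

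\textbf{Step 3: identify the drift.} Write
\[
\varphi_{t\mid 0}(x_t\mid x_0)=\int \varphi_{1\mid 0}(x_1\mid x_0)\,\bQ_{1\mid t}(x_1\mid x_t)\,dx_1 .
\]
Differentiate under the integral, which is justified by the regularity assumptions:
\[
\nabla\log\varphi_{t\mid 0} = \frac{\int \varphi_{1\mid 0}\,\bQ_{1\mid t}\,\nabla\log\bQ_{1\mid t}\,dx_1}{\int\varphi_{1\mid 0}\,\bQ_{1\mid t}\,dx_1}.
\]
The weight $\varphi_{1\mid 0}(x_1\mid x_0)\bQ_{1\mid t}(x_1\mid x_t)/\varphi_{t\mid0}(x_t\mid x_0)$ is the conditional density of $X_1$ given $(X_0,X_t)$ under $\Pi$. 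This holds because $\Pi=\Pi_{0,1}\bQ_{\mid 0,1}$ is in the reciprocal class, and by the Markov property of $\bQ$ one has $\bQ_{1\mid 0,t}=\bQ_{1\mid t}$. Therefore the drift equals $\bE_{\Pi_{1\mid 0,t}}[\nabla\log\bQ_{1\mid t}(X_1\mid X_t)\mid X_0,X_t]$, which is \eqref{eq:sde-of-Pi-given-0}.

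The main obstacle is Step 1: justifying that the $dL_t$ term vanishes and that $Z$ is a true, strictly positive martingale. This is needed so that Girsanov applies up to time $1$ near the boundary and near $t=1$.
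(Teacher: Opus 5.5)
Your proposal is correct and follows essentially the same route as the paper. Both apply Itô's formula to $\varphi_{t\mid 0}(X_t\mid x_0)$ and use the martingale property to kill the combined $dt$ and $dL_t$ finite-variation part, then apply Girsanov while the local-time term stays unchanged; your Step 3 just writes out the drift-identification computation that the paper defers to Shi et al.
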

\begin{proof}
    Denote by \(\calA\) the differential operator
    \[(\calA f)(t, x) \coloneq \frac{\partial f(t, x)}{\partial t} + \sum_{i=1}^d \mu_i(t, x) \frac{\partial f(t, x)}{\partial x_i} + \frac{1}{2} \sum_{i=1}^d \sum_{k=1}^d a_{ik}(t, x) \frac{\partial ^2 f(t, x)}{\partial x_i \partial x_k},\]
    where
    \[a(t, x) \coloneq \sigma \sigma\T (t, x).\]

    Fix \(x_0\). Applying Itô's lemma to the function \((t, x) \mapsto \varphi_{t \mid 0}(x \mid x_0)\) yields
    \begin{equation*}
        \begin{split}
            \varphi_{t \mid 0}(X_t \mid x_0) 
            = 1
              &+ \int_0^t \calA \varphi_{s \mid 0}(X_s \mid x_0) ds 
              + \int_0^t \nabla \varphi_{s \mid 0}(X_s \mid x_0)\T \sigma(s, X_s) dW_s \\
              &+ \int_0^t \nabla \varphi_{s \mid 0}(X_s \mid x_0) \T n(X_s) dL_s.
        \end{split}
    \end{equation*}
    Rearranging the terms as
    \begin{equation*}
        \begin{split}
            \varphi_{t \mid 0}(X_t \mid x_0) - 1 - \int_0^t \nabla \varphi_{s \mid 0}(X_s &\mid x_0)\T \sigma(s, X_s) dW_s
            \\ &= \int_0^t \calA \varphi_{s \mid 0}(X_s \mid x_0) ds
              + \int_0^t \nabla \varphi_{s \mid 0}(X_s \mid x_0) \T n(X_s) dL_s,
        \end{split}
    \end{equation*}
    since the terms on the left-hand side are continuous martingales, the right-hand side is also a continuous martingale.
    But this has finite variation so it must be zero almost surely \citep[see e.g.][Theorem~4.8]{legall2016}.
    Hence,
    \begin{equation*}
        \begin{split}
            d \varphi_{t \mid 0}(X_t \mid x_0) 
            & = \nabla \varphi_{t \mid 0}(X_t \mid x_0)\T \sigma(t, X_t) dW_t \\
            & = \varphi_{t \mid 0}(X_t \mid x_0) \nabla \log \varphi_{t \mid 0}(X_t \mid x_0)\T \sigma(t, X_t) dW_t,
        \end{split}
    \end{equation*}
    where equivalence between \(\Pi_{\mid 0}\) and \(\bQ_{\mid 0}\) assures \(\varphi_{t \mid 0} > 0\) a.s. By Girsanov's theorem,
    \[\widetilde{W}_t \coloneq W_t - \int_0^t \sigma\T (s, X_s) \nabla \log \varphi_{s \mid 0}(X_s \mid x_0) ds\]
    is a martingale Brownian motion on \([0, 1]\) under \(\tilde{\bP}_{\mid 0}\).
    Therefore, it holds that
    \[dX_t = \left(\mu(t, X_t) + \sigma \sigma\T (t, X_t) \nabla \log \varphi_{t \mid 0}(X_t \mid x_0) \right) dt + \sigma(t, X_t) d\widetilde{W}_t + n(X_t) dL_t.\]
    Finally, the computation in \citet[Proof of Prop.~2]{shi2023diffusion} establishing that
    \[\nabla \log \varphi_{t \mid 0}(x_t \mid x_0) = \bE_{\Pi_{1\mid 0, t}}[\nabla \log \bQ_{1 \mid t}(X_1 \mid X_t) \mid X_0 = x_0, X_t = x_t]\]
    follows analogously in the reflected setting, resulting in \eqref{eq:sde-of-Pi-given-0}.
\end{proof}

\subsection{Mixture of diffusion bridges}
The last ingredient needed to prove Proposition~\ref{prop:markov-proj-refl-main} is a means of getting the dynamics of the \(\Pi_0\)-mixture of diffusion bridges \(\Pi_{\mid 0}\) given by \eqref{eq:sde-of-Pi-given-0}.
This is given by an extension of \citet[Theorem~1]{peluchetti2023diffusion} to reflected SDEs.
For completeness, the theorem is restated here in our notation before we provide the extension to our setting.
\begin{theorem}[{\citet[Theorem~1]{peluchetti2023diffusion}}]
    \label{thm:peluchetti-mixture-fp}
    Consider the set of SDEs indexed by $\lambda \in \Lambda$
    \begin{equation}
    \label{eq:SDEs-indexed-by-lambda}
        \begin{split}
            & dX_t^\lambda = \mu^\lambda(t, X_t^\lambda) dt + \sigma^\lambda(t, X_t^\lambda) dW_t^\lambda, \quad t \in [0, 1], \\
            & X_0^\lambda \sim \bP_0^\lambda,
        \end{split}
    \end{equation}
    corresponding to the path measures $\{\bP^\lambda\}_{\lambda \in \Lambda}$ and with marginal densities $p_t^\lambda$. For a mixing $\Psi$ on $\Lambda$, let $\bP$ be obtained by taking the $\Psi$-mixture of \eqref{eq:SDEs-indexed-by-lambda} over $\lambda \in \Lambda$. In particular, define the mixture marginal densities $p_t, t \in (0,1)$, and the mixture initial measure $\bP_0$ by
    \begin{equation}
        p_t(x) \coloneq \int_\Lambda p_t^\lambda(x) \Psi(d\lambda), \quad \bP_0(\cdot) \coloneq \int_\Lambda \bP_0^\lambda(\cdot) \Psi(d\lambda).
    \end{equation}
    Consider the SDE
    \begin{equation}
        \label{eq:peluchetti-X_t}
        \begin{split}
            & dX_t = \mu(t, X_t) dt + \sigma(t, X_t) dW_t, \quad t \in [0, 1], \\
            & \mu(t, x) \coloneq \frac{1}{p_t(x)} \int_\Lambda \mu^\lambda(t, x) p_t^\lambda(x) \Psi(d\lambda), \\
            & \sigma(t, x) \coloneq \frac{1}{p_t(x)} \int_\Lambda \sigma^\lambda(t, x) p_t^\lambda(x) \Psi(d\lambda), \\
            & X_0 \sim \bP_0,
        \end{split}
    \end{equation}
    with law $\tilde{\bP}$. Then, under mild conditions\footnote{See \citep{peluchetti2023diffusion} for details.}, 
    \begin{equation}
        \label{eq:Ptilde_t-equals-P_t}
        \tilde{\bP}_t = \bP_t \quad \forall t \in [0, 1].
    \end{equation}
\end{theorem}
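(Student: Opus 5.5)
The natural route is through the Fokker--Planck equation satisfied by the marginal densities. Under the mild regularity conditions, I would first show that each \(p_t^\lambda\) is a weak solution of
\[
\partial_t p_t^\lambda = -\nabla\cdot(\mu^\lambda p_t^\lambda) + \tfrac12 \sum_{i,k}\partial_i\partial_k\big(a^\lambda_{ik} p_t^\lambda\big), \qquad a^\lambda \coloneq \sigma^\lambda(\sigma^\lambda)\T .
\]
To get this, apply Itô's formula to \(f(X_t^\lambda)\) for a test function \(f\in C_c^\infty(\bR^d)\), take expectations so the stochastic integral vanishes, and integrate by parts.

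The key step is to integrate this equation against \(\Psi(d\lambda)\), which requires exchanging the \(\lambda\)-integral with the time and space derivatives (in weak form). For this I would assume Fubini-type integrability, for instance \(\int_\Lambda\int_0^1\bE\big[|\mu^\lambda(t,X^\lambda_t)| + \|a^\lambda(t,X_t^\lambda)\|\big]\,dt\,\Psi(d\lambda)<\infty\). The mixture density \(p_t\) then solves a Fokker--Planck equation with flux built from \(\int \mu^\lambda p^\lambda_t\,\Psi(d\lambda) = \mu p_t\) and \(\int a^\lambda p_t^\lambda\,\Psi(d\lambda)\).

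The diffusion coefficient needs care. What has to match is the mixed \(a\), not the mixed \(\sigma\). In our application \(\sigma^\lambda=\sigma\) is independent of \(\lambda\), so the two coincide. In general I would read the theorem as requiring \(\sigma\sigma\T = p_t^{-1}\int a^\lambda p_t^\lambda\,\Psi(d\lambda)\), which is what Peluchetti's conditions impose. With that, \(p_t\) and the marginal density \(\tilde p_t\) of \(\tilde{\bP}\) solve the same linear Fokker--Planck equation with the same initial law \(\bP_0\).

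The main obstacle is the final step: concluding \(\tilde{\bP}_t=\bP_t\) from uniqueness of solutions to this Fokker--Planck equation. Here I would invoke uniqueness for the linear PDE in the class of probability-measure-valued weak solutions, under the standing assumption that the martingale problem for \((\mu,\sigma\sigma\T)\) is well posed. By Figalli's / the Trevisan superposition principle, any such measure solution is the marginal flow of a solution to the martingale problem, so well-posedness yields uniqueness. These are exactly the ``mild conditions'' of the statement, and I would defer to \citet{peluchetti2023diffusion} for verifying them.

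For the reflected extension that follows, the same argument applies with test functions satisfying the Neumann condition \(\nabla f\cdot n=0\) on \(\partial D\). This kills the \(dL_t\) term in Itô's formula, so the Fokker--Planck equations acquire no-flux boundary conditions.
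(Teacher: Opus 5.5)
Your proposal is correct and follows the route the paper indicates. The paper gives no proof beyond citing Peluchetti, but in the proof of Corollary~\ref{cor:law-of-mixture-of-reflected-bridges} it describes the non-reflected case as showing that $p_t$ and $\tilde p_t$ satisfy the same Fokker--Planck equation after exchanging the $\Psi$-integral with the derivatives, and your superposition/well-posedness argument makes explicit the uniqueness step hidden in the ``mild conditions''.

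Your caveat on the diffusion coefficient is also right: the averaging must be of $\sigma\sigma\T$, not of $\sigma$ as written. For example, with $\sigma^{1}=1$, $\sigma^{2}=-1$, zero drift, equal weights and a common initial law, the stated formula gives $\sigma\equiv 0$. This is harmless for the paper, since the corollary assumes $\sigma^\lambda=\sigma$ for every $\lambda$.
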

\begin{remark}
    In the context of this paper, the set of SDEs is indexed by $x_0$, $\bP_0^{x_0} = \delta_{x_0}$ is a Dirac point mass, and mixing is done with $\Pi_0$, whereby $\bP^{x_0} = \Pi_{\mid 0}(\cdot \mid X_0 = x_0)$ and $\bP = \Pi$.
\end{remark}
\begin{corollary}
    \label{cor:law-of-mixture-of-reflected-bridges}
    Let $X^\lambda$ and $X$ be the corresponding reflected processes of \eqref{eq:SDEs-indexed-by-lambda} and \eqref{eq:peluchetti-X_t}. If $\sigma^\lambda(t, x) = \sigma(t, x)$ is the same for all $\lambda$,
    then \eqref{eq:Ptilde_t-equals-P_t} still holds.
\end{corollary}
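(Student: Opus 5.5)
I will follow the route Peluchetti's proof takes in the non-reflected case and pass through the Fokker--Planck equation. The only change is that each marginal density now solves its Fokker--Planck equation together with a no-flux (Neumann-type) boundary condition on \(\partial D\).

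For each \(\lambda\), write the flux
\[J^\lambda_i(t,x) \coloneq \mu^\lambda_i p^\lambda_t - \tfrac12\sum_k \partial_{x_k}(a_{ik} p^\lambda_t),\]
where \(a = \sigma\sigma\T\) is the same for all \(\lambda\). The density of the reflected process \(X^\lambda\) then solves, weakly,
\[\partial_t p^\lambda_t = -\nabla\cdot J^\lambda \ \text{in } D, \qquad J^\lambda\cdot n = 0 \ \text{on } \partial D.\]
In weak form this reads: for every test function \(f \in C^2_b(\bar D)\) with \(\nabla f\cdot n = 0\) on \(\partial D\),
\[\frac{d}{dt}\int_D f\,p^\lambda_t\,dx = \int_D \Big(\mu^\lambda\cdot\nabla f + \tfrac12 \operatorname{tr}(a\nabla^2 f)\Big)p^\lambda_t\,dx.\]
This identity follows directly from It\^o's formula applied to \(f(X^\lambda_t)\). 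The local-time term \(\nabla f(X_s)\T n(X_s)\,dL_s\) vanishes because of the Neumann condition on \(f\), and the stochastic integral has zero expectation.

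\textbf{Integrate over \(\lambda\).} Integrating the weak identity against \(\Psi(d\lambda)\) and using Fubini (the mild integrability conditions assumed as in Theorem~\ref{thm:peluchetti-mixture-fp}) gives
\[\frac{d}{dt}\int f\,p_t = \int \Big(\mu\cdot\nabla f + \tfrac12\operatorname{tr}(a\nabla^2 f)\Big)p_t\,dx,\]
with \(\mu\) the averaged drift from \eqref{eq:peluchetti-X_t}. This step is where the common diffusion matrix matters. The second-order term is linear in \(p^\lambda_t\) with a \(\lambda\)-independent coefficient, so it averages exactly to \(a\,p_t\). With \(\lambda\)-dependent \(\sigma^\lambda\), the averaged \(\sigma\) from \eqref{eq:peluchetti-X_t} would not reproduce \(a\); this explains the hypothesis. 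The same It\^o argument applied to the reflected SDE with drift \(\mu\) and diffusion \(\sigma\) shows that \(\tilde\bP_t\) satisfies this identical weak equation with the same initial law \(\bP_0\).

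\textbf{Uniqueness (main obstacle).} It remains to conclude \(\tilde\bP_t = \bP_t\) from uniqueness of weak (measure-valued) solutions of the Fokker--Planck equation with Neumann boundary condition on \(D\). This is where I expect the real difficulty; Peluchetti's ``mild conditions'' must be transplanted to the reflected setting.

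My first attempt is a duality argument. Fix \(T\) and \(g\), and solve the backward Kolmogorov equation
\[\partial_s u + \mu\cdot\nabla u + \tfrac12\operatorname{tr}(a\nabla^2 u) = 0, \qquad u(T,\cdot) = g, \qquad \nabla u\cdot n = 0 \ \text{on } \partial D.\]
Provided this problem has a classical \(C^{1,2}\) solution (true under regularity of \(\mu\), uniform ellipticity of \(a\), and smoothness of \(\partial D\), or via approximation on the cube), we can test both weak solutions with \(u(s,\cdot)\). This shows that \(\int g\,d\bP_T\) and \(\int g\,d\tilde\bP_T\) both equal \(\int u(0,\cdot)\,d\bP_0\).

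If that regularity is unavailable, the fallback is to assume well-posedness of the martingale problem with reflection for \((\mu,a,D)\), and then invoke a reflected version of the superposition principle. That principle lifts a solution of the Fokker--Planck equation to a solution of the martingale problem, and uniqueness of the latter then gives equality of the one-dimensional marginals.
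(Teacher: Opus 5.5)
Your proposal is correct. It follows the same skeleton as the paper: show that the mixture marginals and the marginals of the averaged reflected SDE solve one and the same reflected Fokker--Planck problem, which is linear in the density because $\sigma\sigma\T$ does not depend on $\lambda$, and then conclude by uniqueness.

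The difference lies in how the reflection is encoded. The paper works with the strong form. It quotes the no-flux condition $J\cdot n=0$ and verifies it for the mixture by exchanging $\nabla_x\cdot(\sigma\sigma\T\,\cdot)$ with the $\Psi$-integral, giving $J\cdot n=\int_\Lambda J^\lambda\cdot n\,\Psi(d\lambda)=0$. Uniqueness is left implicit in Peluchetti's ``mild conditions''.

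You instead build the boundary into the test-function class $\{\nabla f\cdot n=0\}$ via It\^o's formula, where the $dL$ term vanishes. This buys you three things:
\begin{itemize}
\item no separate boundary computation;
\item no differentiability of $p_t$ up to $\partial D$;
\item a formulation at the level of measures, so the Dirac initial laws $\bP_0^{\lambda}=\delta_{x_0}$ of the application cause no trouble.
\end{itemize}
You also make the uniqueness step explicit. Your duality or superposition argument still rests on assumed regularity or well-posedness, but that is the same level of assumption as the paper's appeal to mild conditions.

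Your route has a further benefit. The It\^o-derived weak identity is exactly right for normal reflection with any common $a=\sigma\sigma\T$. The strong condition $J\cdot n=0$ is equivalent to it only when $an$ is parallel to $n$ on $\partial D$. This holds automatically for scalar $\sigma(t)$, as in the main text. Otherwise, integrating by parts against $f$ with $\nabla f\cdot n=0$ leaves a tangential boundary term involving $p\,(an)_{\mathrm{tan}}\cdot\nabla f$. So your sentence presenting $J\cdot n=0$ as the strong form of your weak equation needs that caveat, though your argument never actually uses it.

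Overall, the paper's version is shorter and reuses Peluchetti's PDE proof almost verbatim, while yours is more self-contained and more robust.
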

\begin{proof}
    The non-reflected case is proved by showing $p$ and $\tilde{p}$, the marginal densities of $\bP$ and $\tilde{\bP}$, respectively, obey the same Fokker-Planck equation. The Fokker-Planck equation for a reflected process in a domain $D$ with normal $n$ is the same PDE inside $D$, but with a Neumann boundary condition \citep{Schuss2013}:
    \begin{equation}
        (\mu p_t - \frac{1}{2}\nabla_x \cdot (\sigma \sigma\T p_t) ) \cdot n = 0, \quad x \in \partial D, \; t > 0,
    \end{equation}
    using the notation
    \begin{equation*}
        (\nabla_x \cdot (\sigma \sigma\T p_t))_i = \sum_j \partial_{x_j} \left((\sigma\sigma\T)_{ij} p_t\right).
    \end{equation*}
    Since the Fokker-Planck equations coincide for a reflected and a non-reflected process inside $D$, it remains to check the boundary condition: From the PDE of $p_t$ we get the flux
    \begin{equation}
        J = \mu p_t - \frac{1}{2}\nabla_x \cdot (\sigma\sigma\T p_t)
    \end{equation}
    where $\mu$ is as in \eqref{eq:peluchetti-X_t}. The Neumann boundary condition of $p_t$ then reads
    \begin{equation}
        \begin{split}
            J \cdot n 
            &= \left[ \mu p_t - \frac{1}{2}\nabla_x \cdot (\sigma\sigma\T p_t) \right] \cdot n \\
            &= \left[ \int_\Lambda \mu^\lambda p_t^\lambda \Psi(d\lambda) - \frac{1}{2}\nabla_x \cdot \left(\sigma\sigma\T \int_\Lambda p_t^\lambda \Psi(d\lambda)\right) \right] \cdot n \\
            &= \left[ \int_\Lambda \mu^\lambda p_t^\lambda \Psi(d\lambda) -  \int_\Lambda \frac{1}{2}\nabla_x \cdot (\sigma\sigma\T p_t^\lambda) \Psi(d\lambda) \right] \cdot n \\
            &= \int_\Lambda \left[ \mu^\lambda p_t^\lambda - \frac{1}{2}\nabla_x \cdot (\sigma\sigma\T p_t^\lambda) \right] \cdot n \,\Psi(d\lambda) \\
            &= 0,
        \end{split}
    \end{equation}
    where, as in Theorem~\ref{thm:peluchetti-mixture-fp}, we rely on the assumption that the exchange of limits going from the second to the third line holds.
\end{proof}

\subsection{Proving Proposition~\ref{prop:markov-proj-refl-main}}
\begin{proof}
    The procedure largely follows the proof of Proposition~2 in \citet{shi2023diffusion}.
    By proposition~\ref{prop:Pi-given-0-RSDE}, \(\Pi_{\mid 0}\) is the law of the RSDE
    \begin{equation*}
        \begin{split}
            dX_t^{x_0} &= \left(\mu(t, X_t^{x_0}) + \sigma\sigma\T (t, X_t^{x_0}) \, \bE_{\Pi_{1 \mid 0, t}}[\nabla \log \bQ^r_{1\mid t}(X_1 \mid X_t) \mid X_0=x_0, X_t = X_t^{x_0}]\right) dt \\
            &\phantom{=.} + \sigma (t, X_t^{x_0}) dW_t + n(X_t^{x_0}) dL_t, \\
            X_0^{x_0} &= x_0.
        \end{split}
    \end{equation*}
    
    Mixing with \(\Pi_0\) gives \(\Pi_{\mid 0} \Pi_0 = \Pi\) which by Corollary~\ref{cor:law-of-mixture-of-reflected-bridges} has the same marginals as the RSDE with drift
    \begin{equation*}
        \begin{split}
            &\bE_{\Pi_{0 \mid t}} \left[\mu(t, X_t) + \sigma\sigma\T(t, X_t) \, \bE_{\Pi_{1 \mid 0, t}}[\nabla \log \bQ^r_{1\mid t}(X_1 \mid X_t) \mid X_0, X_t] \; \middle| \; X_t = x \right] \\
            &= \mu(t, x) + \sigma\sigma\T(t, x) \, \bE_{\Pi_{1 \mid t}}[\nabla \log \bQ^r_{1\mid t}(X_1 \mid X_t) \mid X_t = x]
        \end{split}
    \end{equation*}
    and diffusion \(\sigma\), which we recognize as the coefficients in \eqref{eq:def-sde-of-refl-markov-proj} of \(\bM^*\).
    This establishes (i).

    For (ii) and (iii), first we compute an expression of the KL divergence between \(\Pi\) and \(\bM \in \calM\), the path law of the solution to the RSDE
    \[X_t = X_0 + \int_0^t \left(\mu(s, X_s) + v^{\bM}_s(X_s)\right) ds + \int_0^t \sigma^{\bM}(s, X_s) dW^{\bM}_s + \int_0^t n(X_s) dL_s, \quad X_0 \sim \bM_0.\]
    Assume \(\KL (\Pi \mid\mid \bM) < \infty\).
    Then the quadratic variation must be the same for both processes, \(\sigma^{\bM} (\sigma^{\bM})\T = \sigma \sigma\T\), almost surely.
    Since the local time term has bounded variation, we may use Girsanov's theorem of \citet[Theorem~2.1]{leonard2012girsanov} to identify the change in drift going from \(\bM\) to \(\Pi\) by 
    \begin{equation*}
        \begin{split}
            \int_0^t \sigma\sigma\T(s, X_s) \beta_s ds = &\int_0^t \left(\sigma\sigma\T(s, X_s) \, \bE_{\Pi_{1 \mid 0, s}}[\nabla \log \bQ^r_{1\mid s}(X_1 \mid X_s) \mid X_0, X_s] - v_s(X_s)\right) ds.
        \end{split}
    \end{equation*}
    The key to this step is to note that the reflection term does not change.
    Thus, we have
    \[\beta_t = \big(\sigma\sigma\T(t, X_t)\big)^{-1}\left( u_t(X_0, X_t) - v_t(X_t) \right),\]
    with
    \[u_t(X_0, X_t) \coloneq \sigma\sigma\T(t, X_t)\,\bE_{\Pi_{1 \mid 0, t}}[\nabla \log \bQ^r_{1\mid t}(X_1 \mid X_t) \mid X_0, X_t].\]
    Using \citep[Theorem~2.3]{leonard2012girsanov}, the expression of the KL is the same form as in the unreflected case in \citet{shi2023diffusion}:
    \begin{equation*}
        \begin{split}
            \KL(\Pi \mid\mid &\bM) - \KL(\Pi_0 \mid\mid \bM_0) \\
            &= \frac{1}{2} \bE_\Pi \left[\int_0^1 (u_t(X_0, X_t) - v_t(X_t))\T (\sigma\sigma\T(t, X_t))^{-1} (u_t(X_0, X_t) - v_t(X_t)) dt\right] \\
            &= \frac{1}{2} \int_0^1 \bE_{\Pi_{0, t}} \left[ (u_t(X_0, X_t) - v_t(X_t))\T (\sigma\sigma\T(t, X_t))^{-1} (u_t(X_0, X_t) - v_t(X_t)) \right] dt,
        \end{split}
    \end{equation*}
    where the change in order of integration is valid by Fubini's theorem.
    The KL divergence is minimized when \(\bM_0 = \Pi_0\) and, since \(\sigma \sigma\T\) is positive definite, for each \(t\),
    \begin{equation*}
        \begin{split}
            v^*_t(x) &= \bE_{\Pi_{0 \mid t}}\left[u_t(X_0, X_t) \mid X_t = x \right] \\
            &= \bE_{\Pi_{0 \mid t}}\left[\sigma\sigma\T(t, X_t)\,\bE_{\Pi_{1 \mid 0, t}}[\nabla \log \bQ^r_{1\mid t}(X_1 \mid X_t) \mid X_0, X_t] \; \middle| \; X_t = x\right].
        \end{split}
    \end{equation*}
    If \(\sigma(t, x) = \sigma(t)\), this becomes
    \[\sigma\sigma\T(t)\, \bE_{\Pi_{1 \mid t}}[\nabla \log \bQ^r_{1\mid t}(X_1 \mid X_t) \mid X_t = x],\]
    i.e. the h-transform term in the drift of \eqref{eq:def-sde-of-refl-markov-proj}.
\end{proof}

\section{Experiment and model details}
\label{sec:appendix-training-details}
We mimic the experimental setup of \citet{debortoli2024schrodinger}.
As architecture we start from the U-Net implementation of \citet[MIT License]{dhariwal2021diffusion} in PyTorch \citep{paszke2019pytorch}, extended to include conditioning on a directional embedding.
Adam \citep{kingma2014adam} with \(\beta=(0.9, 0.999)\) is used as optimizer and gradients are clipped if exceeding unit norm.

In both experiments, we sample using an EMA model with decay rate \(0.999\), both in training in the finetuning step and for evaluation.

We used 11 reflections per dimension in the truncation of \eqref{eq:reflected-transition-density}.

\begin{table}[h]
    \centering
    \begin{threeparttable}
        \caption{Experiment parameters}
        \label{tab:hyperparameters}
        \begin{tabular}{lcc}
            \toprule
             & MNIST \(\leftrightarrow\) EMNIST & AFHQ \\
             \cmidrule(r){2-2} \cmidrule(l){3-3}
             Sample dimension & \(1 \times 28 \times 28\) & \(3 \times 64 \times 64\) \\
             \(\sigma\) & 1.00 & 0.75 \\
            Euler-Maruyama discretization steps & 30 & 100 \\
            Model channels & 64 & 128 \\
            Channel multipliers & 1, 2, 2 & 1, 2, 3, 4 \\
            Number of residual blocks & 2 & 4 \\
            Attention resolutions & -- & 32, 16, 8 \\
            Number of head channels & -- & 64 \\
            Dropout & 0.0 & 0.1 \\
            Batch size\tnote{a} & 256 & 128 \\
            Learning rate & \(1 \times 10^{-4}\) & \(2 \times 10^{-4}\) \\
            Warm-up period (linear schedule) & -- & 5,000 \\
            \(N_\text{pretrain}\) & 50,000 & 75,000 \\
            \(N_\text{finetune}\) & 100,000 & 20,000 \\
             \bottomrule
        \end{tabular}

        \begin{tablenotes}
            \small \item[a] Due to a difference in implementation compared with \cref{alg:rsbm}, the batch size is doubled in finetuning.
        \end{tablenotes}
    \end{threeparttable}
\end{table}

\subsection{Compute resources}
\label{subsec:compute-resources}
The experiments were run on a HPC cluster with 40GB A100 GPUs.
We ran the MNIST experiment three times in total, discovering bugs in the code in between the runs.
For AFHQ, we ran the experiment twice.
We had to rerun the experiment due to an incorrect model parameter configuration (attention resolutions), whereby the architecture did not match that of \citet{debortoli2024schrodinger}.

\section{Additional results}
\label{sec:appendix-additional-results}
\begin{figure}
    \centering
    \begin{subfigure}{0.48\textwidth}
        \includegraphics[width=1\linewidth]{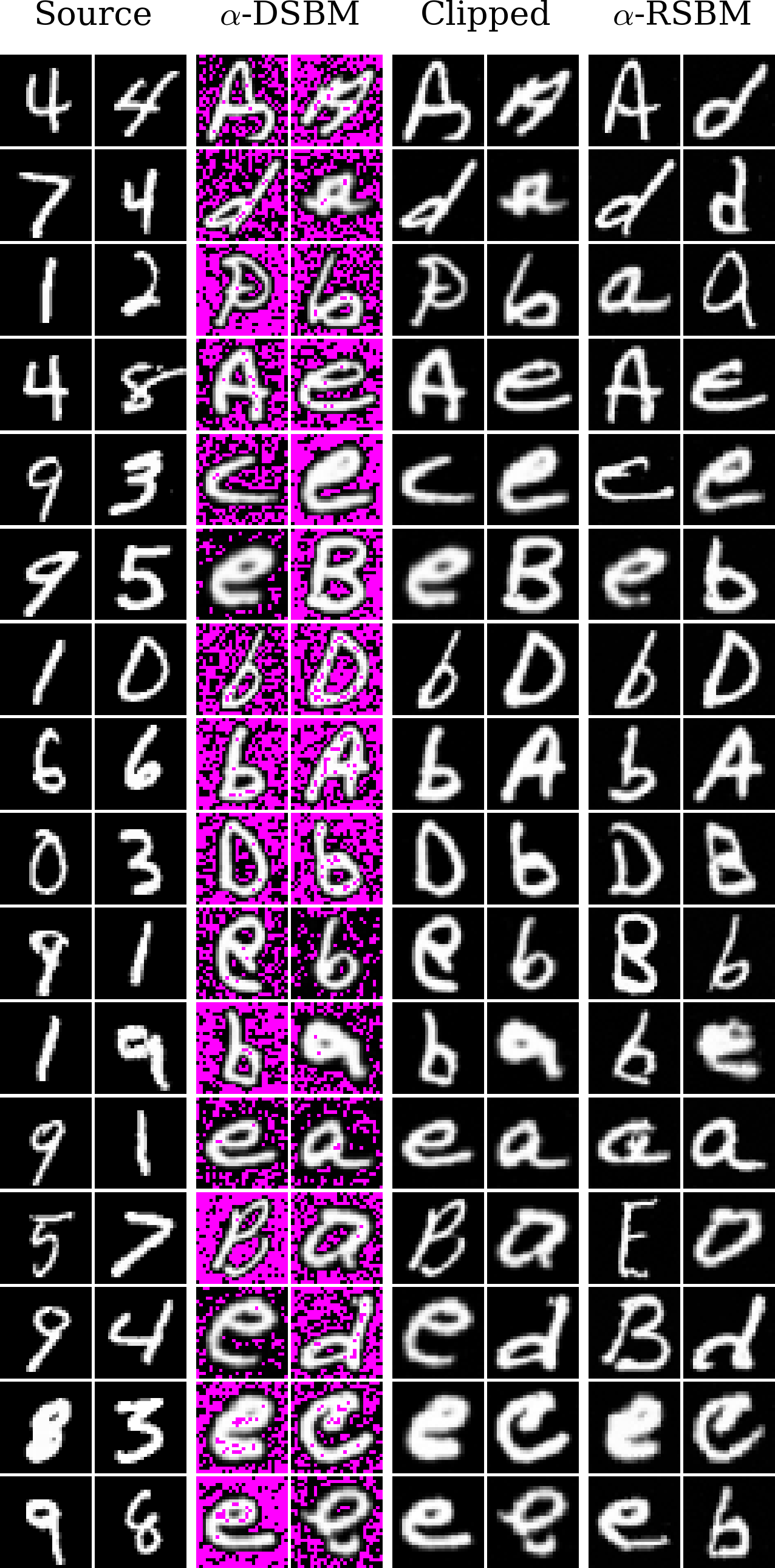}
        \caption{\texttt{MNIST} \(\to\) \texttt{EMNIST}}
    \end{subfigure}
    \vspace{2mm}
    \hfill
    \begin{subfigure}{0.48\textwidth}
        \includegraphics[width=1\linewidth]{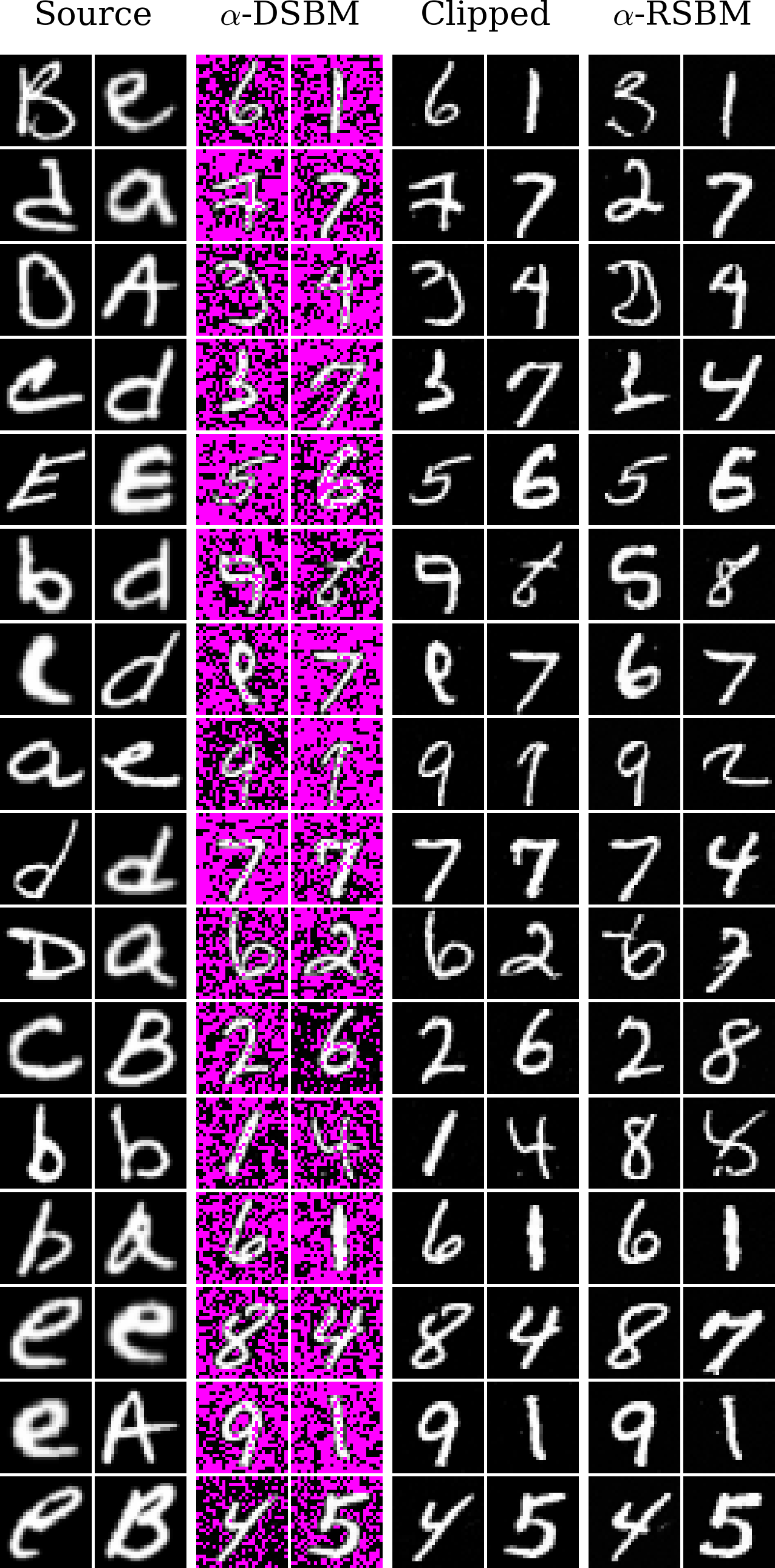}
        \caption{\texttt{EMNIST} \(\to\) \texttt{MNIST}}
    \end{subfigure}
    \begin{subfigure}{0.48\textwidth}
        \includegraphics[width=1\linewidth]{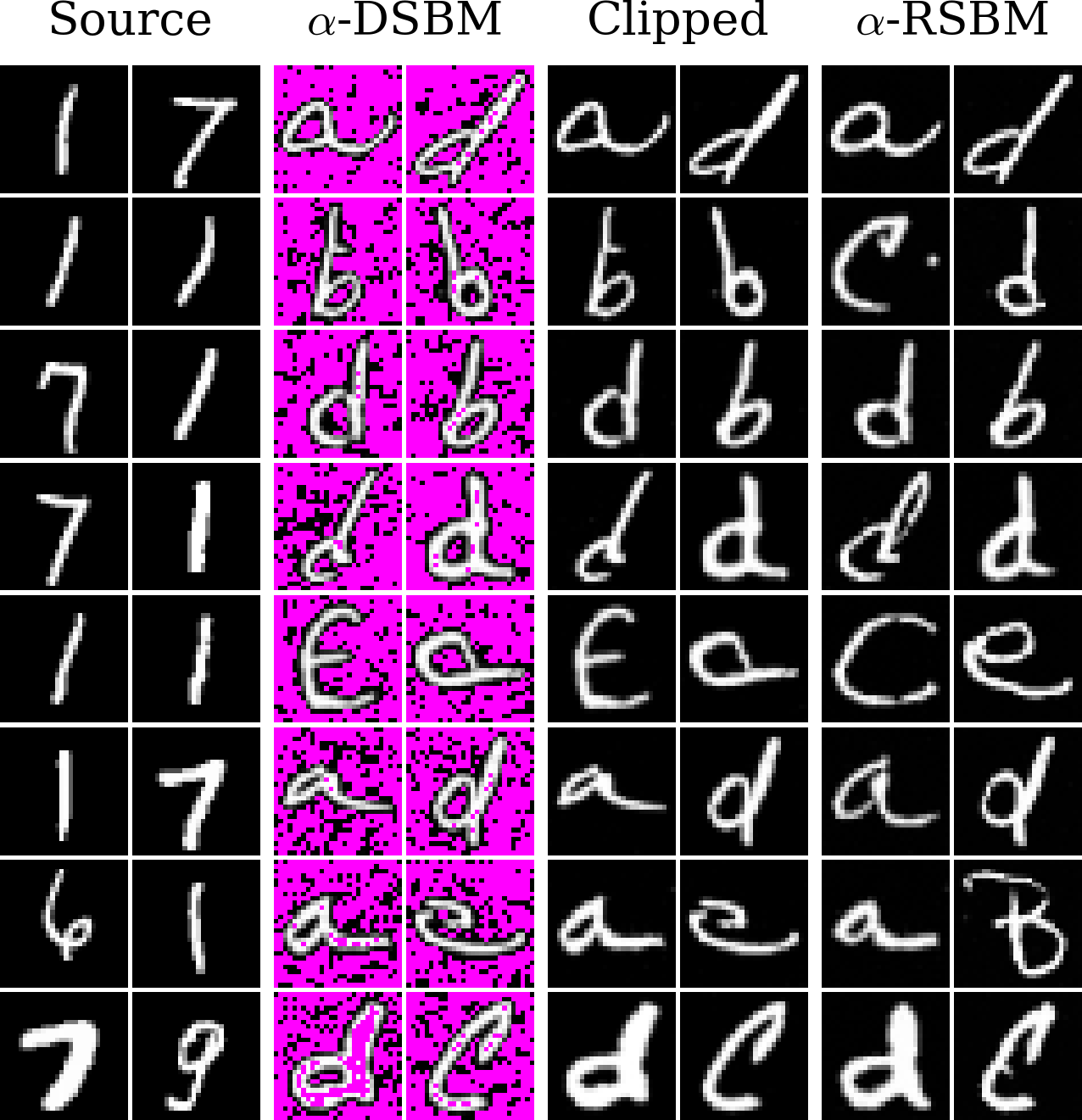}
        \caption{\texttt{MNIST} \(\to\) \texttt{EMNIST}}
    \end{subfigure}
    \hfill
    \begin{subfigure}{0.48\textwidth}
        \includegraphics[width=1\linewidth]{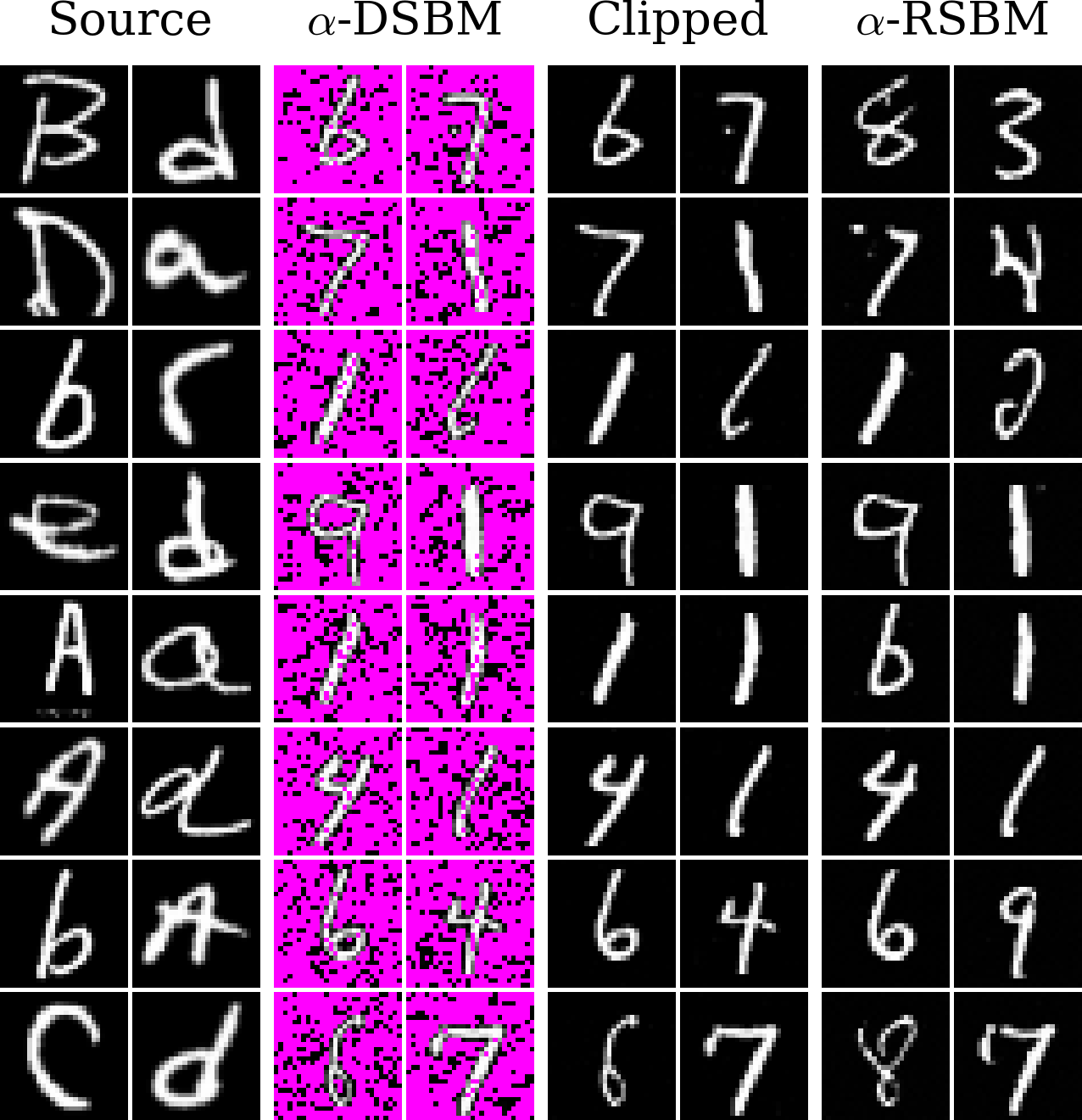}
        \caption{\texttt{EMNIST} \(\to\) \texttt{MNIST}}
    \end{subfigure}
    \caption{
        MNIST-to-EMNIST transfer, where (c) and (d) contains the 16 samples for which \(\alpha\)-DSBM produced highest out-of-bounds pixel count.
        Out-of-bounds pixels are colored in magenta.
    }
    \label{fig:mnist-more-random-samples-incl-top-oob}
\end{figure}

\begin{figure}
    \centering
    \begin{subfigure}{0.48\textwidth}
        \includegraphics[width=1\linewidth]{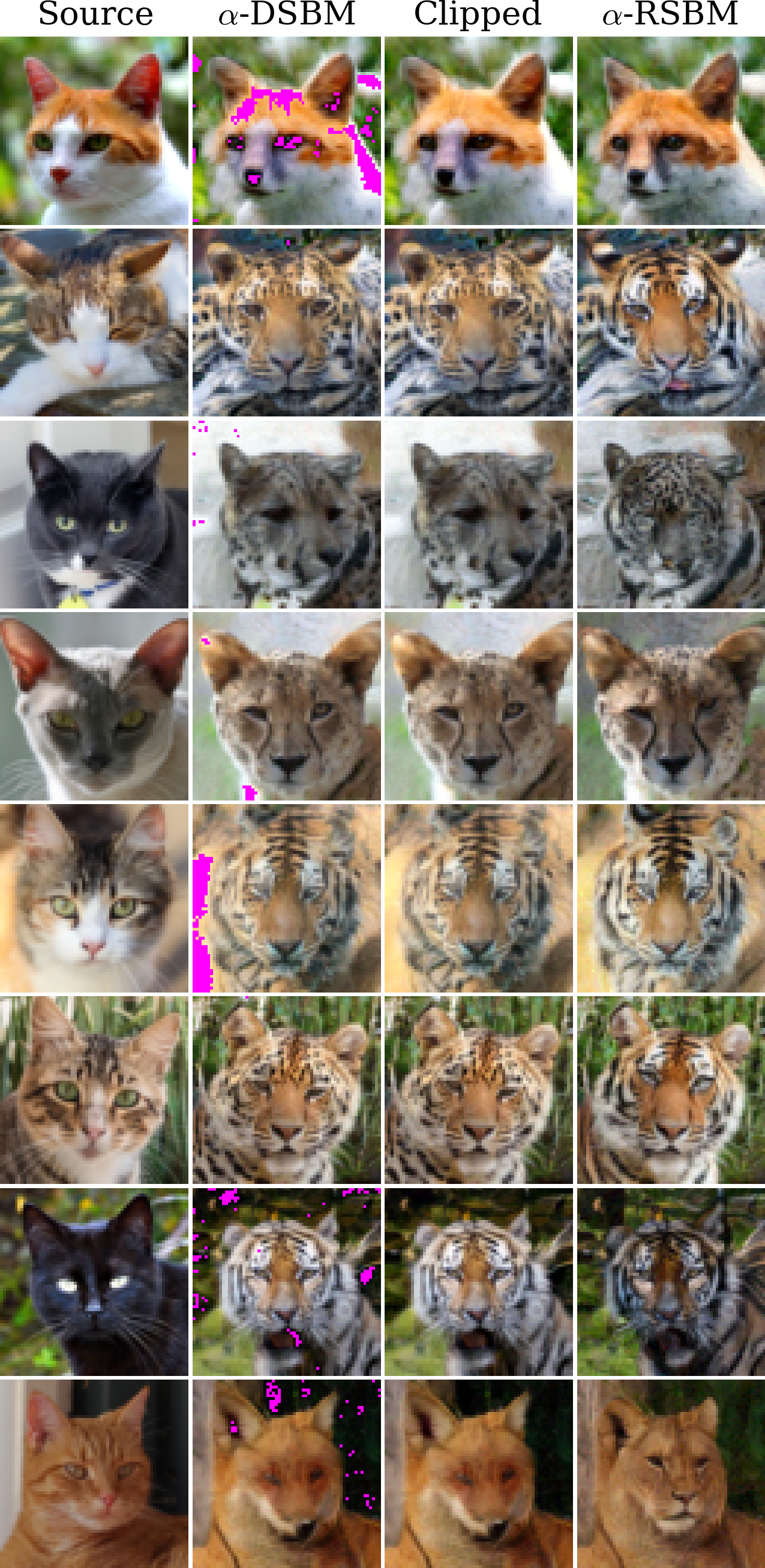}
        \caption{\texttt{cat} \(\to\) \texttt{wild}}
    \end{subfigure}
    \vspace{2mm}
    \hfill
    \begin{subfigure}{0.48\textwidth}
        \includegraphics[width=1\linewidth]{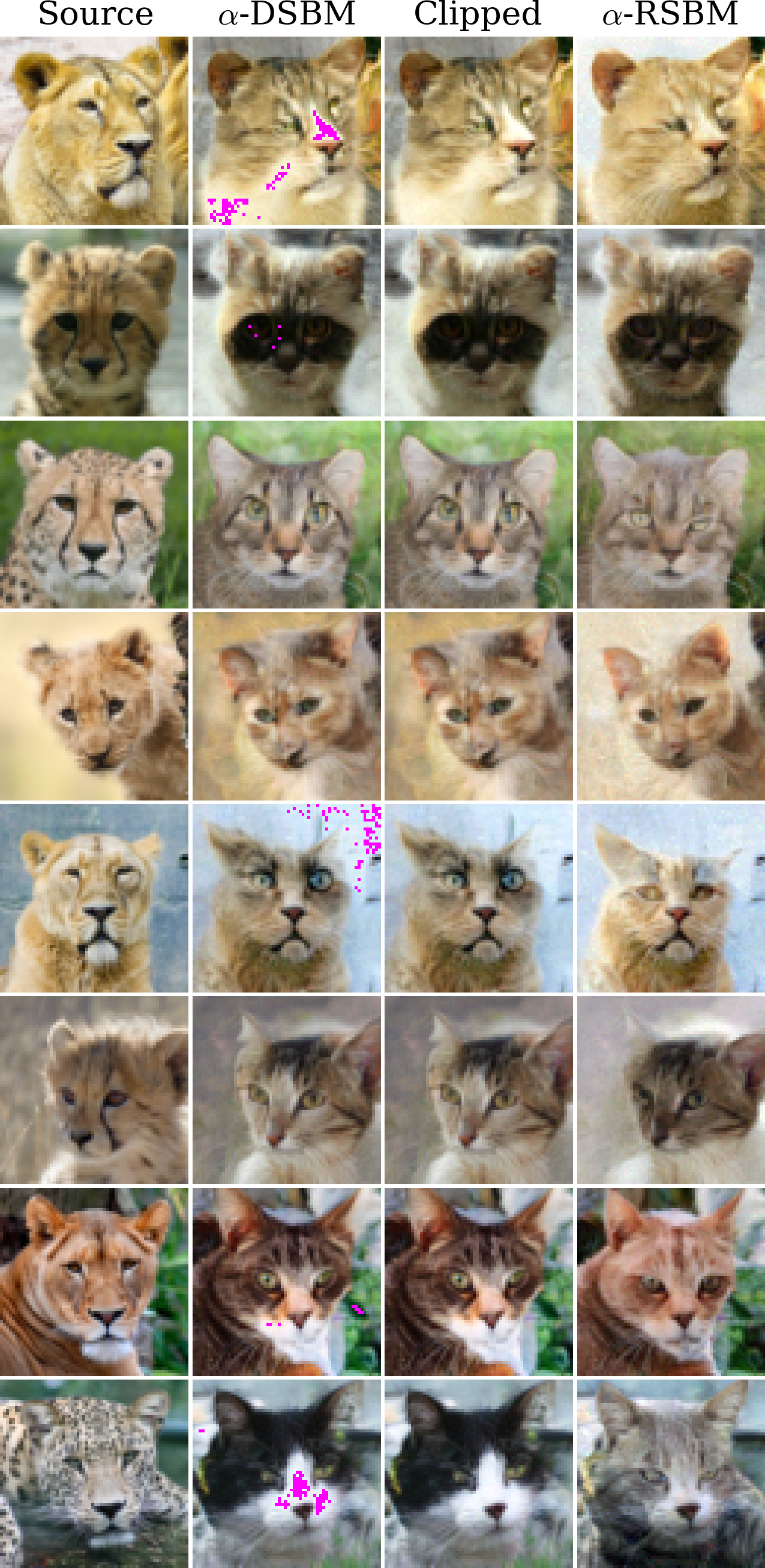}
        \caption{\texttt{wild} \(\to\) \texttt{cat}}
    \end{subfigure}
    \begin{subfigure}{0.48\textwidth}
        \includegraphics[width=1\linewidth]{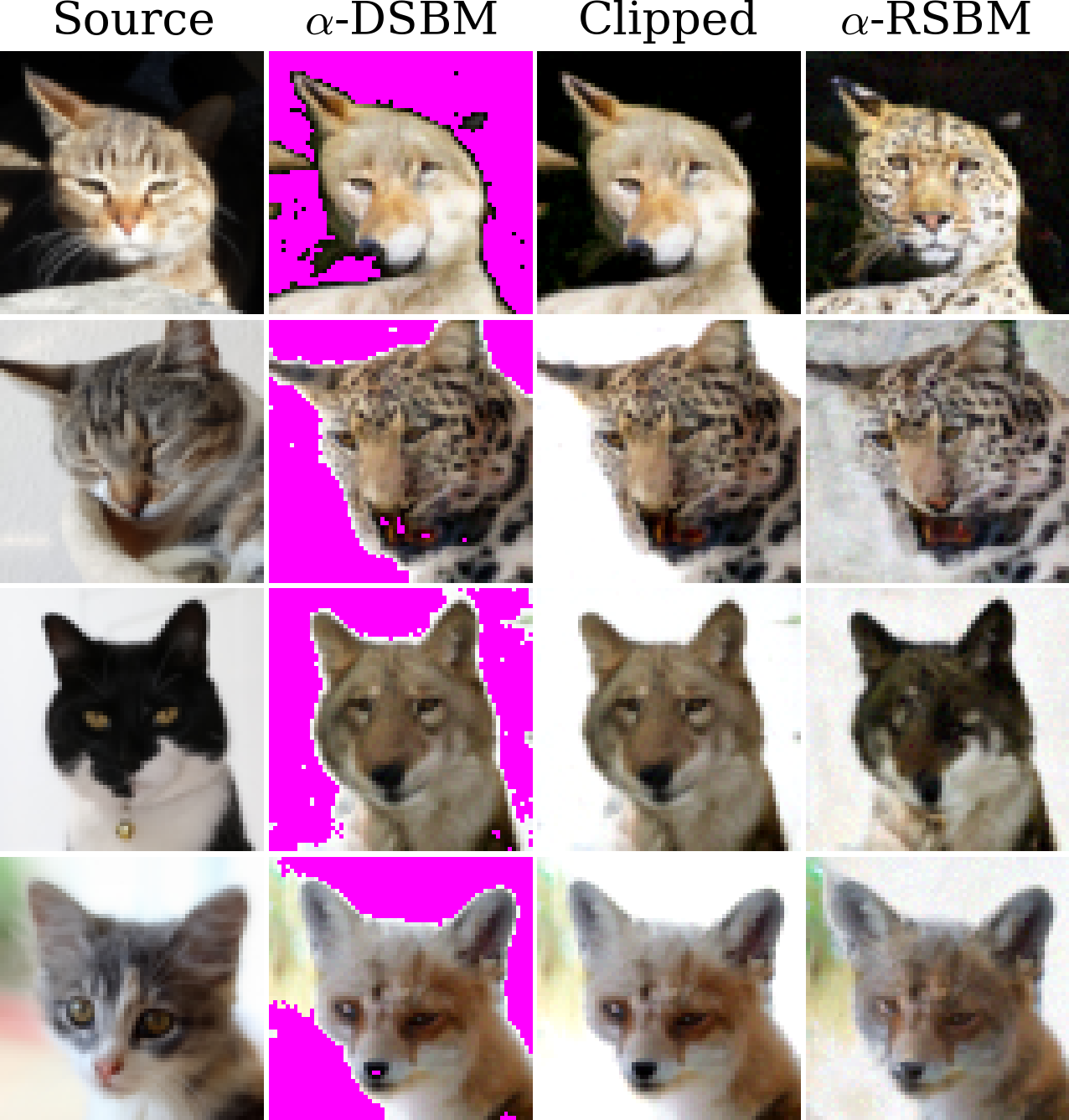}
        \caption{\texttt{cat} \(\to\) \texttt{wild}}
    \end{subfigure}
    \hfill
    \begin{subfigure}{0.48\textwidth}
        \includegraphics[width=1\linewidth]{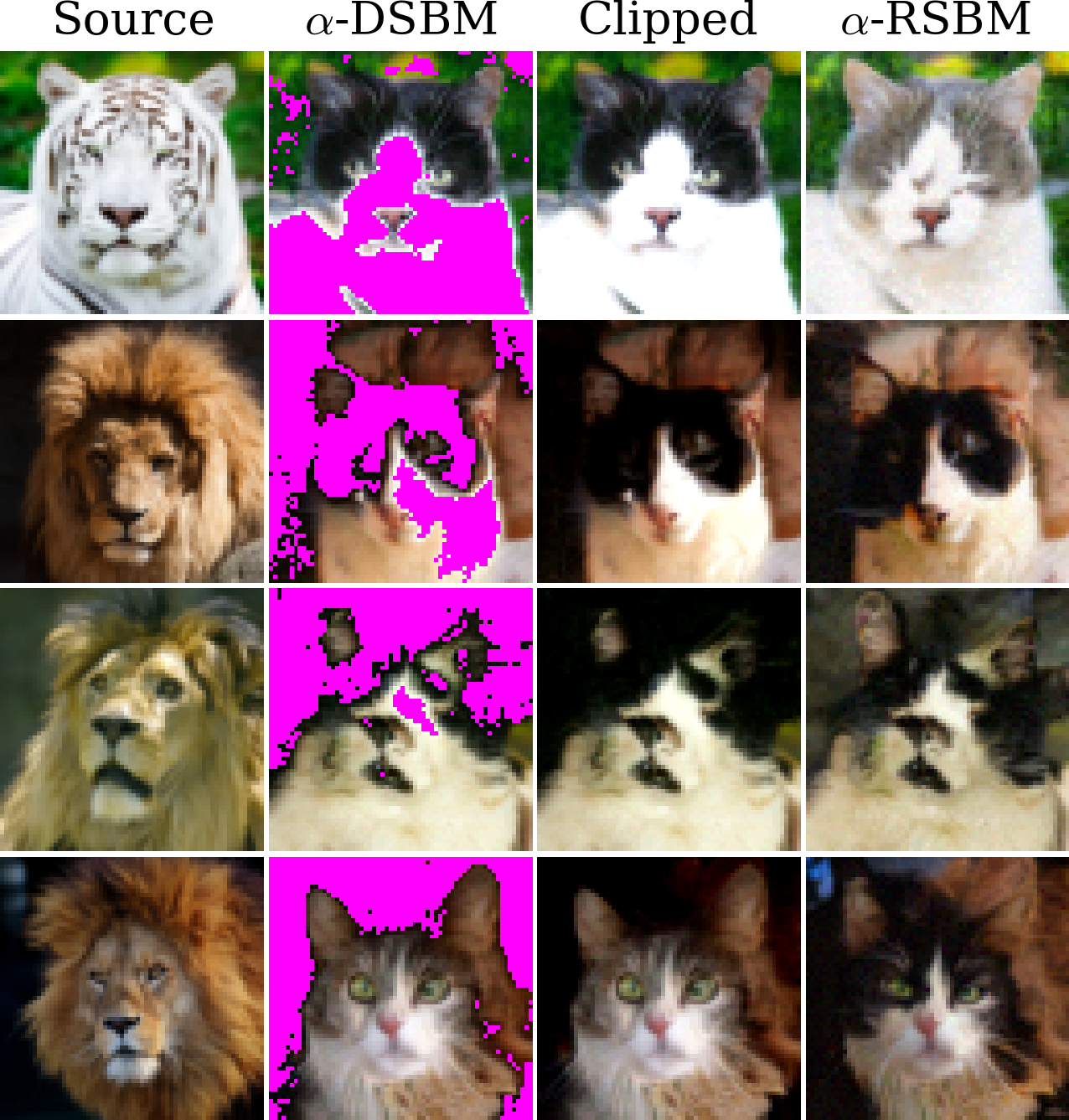}
        \caption{\texttt{wild} \(\to\) \texttt{cat}}
    \end{subfigure}
    \caption{
        AFHQ (\(64 \times 64\)) transfer, where (c) and (d) contains the four samples for which \(\alpha\)-DSBM produced highest out-of-bounds pixel count.
        Out-of-bounds pixels are colored in magenta.
    }
    \label{fig:afhq64-more-random-samples-incl-top-oob}
\end{figure}


\end{document}